\newcommand{\printfnsymbol}[1]{%
  \textsuperscript{\@fnsymbol{#1}}%
}
\newtheorem{theorem}{Theorem}[section]
\newtheorem{lemma}{Lemma}[section]
\newtheorem{corollary}{Corollary}[section]
\newtheorem{assumption}{Assumption}
\newtheorem{definition}{Definition}[section] 
\newcommand{\norm}[1]{\lVert#1\rVert}
\newcommand{\dotprod}[1]{\left\langle #1 \right\rangle}
\newcommand{\dotprodsmall}[1]{\langle #1 \rangle}
\newcommand{\Tr}[1]{\mathrm{Tr}( #1)}
\newcommand{\Rea}[1]{\mathrm{Re}[ #1]}
\newcommand{\R}{\mathbb{R}}
\newcommand{\C}{\mathbb{C}}
\newcommand{\K}{\mathbb{K}}
\newcommand{\Sy}{\mathbb{S}}
\newcommand{\A}{\mathcal{A}}
 \newcommand{\Herm}{\mathbb{H}}
\title{Smoothed analysis of the low-rank approach\\for smooth semidefinite programs}
 \author{
 \textbf{Thomas Pumir}\thanks{Equal contribution}\\ ORFE Department\\ Princeton University \\ \texttt{tpumir@princeton.edu} 
 \and  \textbf{Samy Jelassi}\printfnsymbol{1} \\  ORFE Department \\ Princeton University \\
 \texttt{sjelassi@princeton.edu}  
 \AND \textbf{Nicolas Boumal}\\ Department of Mathematics \\ Princeton University \\  \texttt{nboumal@math.princeton.edu}  }
\begin{document}

\maketitle

\begin{abstract}
We consider semidefinite programs (SDPs) of size $n$ with equality constraints. In order to overcome scalability issues, Burer and Monteiro proposed a factorized approach based on optimizing over a matrix $Y$ of size $n\times k$ such that $X=YY^*$ is the SDP variable. The advantages of such formulation are twofold: the dimension of the optimization variable is reduced, and positive semidefiniteness is naturally enforced. However, optimization in $Y$ is non-convex. In prior work, it has been shown that, when the constraints on the factorized variable regularly define a smooth manifold, provided $k$ is large enough, for almost all cost matrices, all second-order stationary points (SOSPs) are optimal. Importantly, in practice, one can only compute points which approximately satisfy necessary optimality conditions, leading to the question: are such points also approximately optimal? To answer it, under similar assumptions, we use smoothed analysis to show that approximate SOSPs for a randomly perturbed objective function are approximate global optima, with $k$ scaling like the square root of the number of constraints (up to log factors). Moreover, we bound the optimality gap at the approximate solution of the perturbed problem with respect to the original problem. We particularize our results to an SDP relaxation of phase retrieval.


\end{abstract}


\section{Introduction}

We consider semidefinite programs (SDP) over $\K = \R$ or $\C$ of the form: 
\begin{equation}\tag{SDP}\label{eq:eq3}
\begin{aligned}
& \underset{X\in\Sy^{n\times n}}{\text{min}}
& & \dotprod{C,X}\\
& \text{subject to}
& & \mathcal{A}(X)=b, \\
& & & X\succeq 0,\\
\end{aligned}
\end{equation}
with $\dotprod{A,B}=\Rea{\Tr{A^{*}B}}$ the Frobenius inner product ($A^*$ is the conjugate-transpose of $A$),
$\Sy^{n\times n}$ the set of self-adjoint matrices of size $n$ (real symmetric for $\R$, or Hermitian for $\C$), $C\in\Sy^{n\times n}$ the cost matrix, and $\mathcal{A}\colon\Sy^{n\times n}\rightarrow \R^m$ a linear operator capturing $m$ equality 
constraints with right hand side $b\in\R^m$: for each $i$, $\mathcal{A}(X)_i = \dotprod{A_i,X} = b_i$ for given matrices $A_1, \ldots, A_m\in\Sy^{n\times n}$. The optimization variable $X$ is positive semidefinite. We let $\mathcal{C}$ be the feasible set of~\eqref{eq:eq3}:
\begin{equation}\label{eq:C}
\mathcal{C}=\left\{X\in\Sy^{n\times n}: \mathcal{A}(X)=b \text{ and }X\succeq 0 \right\}.
\end{equation}


Large-scale SDPs have been proposed for machine learning applications including matrix completion~\citep{candes2009exact}, 
community detection~\citep{abbe2017community} and kernel learning 
\citep{lanckriet2004learning} for $\K=\R$, and in angular synchronization~\citep{singer2011angular} and phase 
retrieval~\citep{waldspurger2015phase} for $\K=\C$. Unfortunately, traditional methods to solve~\eqref{eq:eq3} do not scale (due to memory and computational requirements), hence the need for alternatives.

In order to address such scalability issues, \citet{burer2003nonlinear,burer2005local} restrict the search to
the set of matrices of rank at most $k$ by factorizing $X$ as $X=YY^*$, with $Y\in \K^{n \times k}$. It 
has been shown that if the search space $\mathcal{C}$~\eqref{eq:C} is compact, then~\eqref{eq:eq3} admits a global optimum of rank at most $r$,
where $\mathrm{dim}\;\Sy^{r\times r} \leq m$~\citep{barvinok1995problems,pataki1998rank}, with $\mathrm{dim}\;\Sy^{r\times r}=\frac{r(r+1)}{2}$ for 
$\K=\R$ and $\mathrm{dim}\;\Sy^{r\times r}=r^2$ for $\K=\C$. In other words, restricting $\mathcal{C}$ to the space of matrices 
with rank at most $k$ with $\mathrm{dim}\;\Sy^{k\times k} > m$ does not change the optimal value. This factorization leads 
to a quadratically constrained quadratic program: 
\begin{equation}\tag{P}\label{eq:eq4}
\begin{aligned}
& \underset{Y\in\K^{n\times k}}{\text{min}}
& & \dotprod{C,YY^*}\\
& \text{subject to}
& & \mathcal{A}(YY^*)=b.
\end{aligned}
\end{equation}
Although~\eqref{eq:eq4} is in general non-convex because its feasible set 
\begin{equation}\label{eq:M}
\mathcal{M}=\mathcal{M}_k=\left\{Y\in\K^{n\times k}: \mathcal{A}(YY^*)=b \right\}
\end{equation}
is non-convex, considering~\eqref{eq:eq4} instead of the original SDP presents significant advantages: the number of variables is reduced from $O(n^2)$ to $O(nk)$, and the positive semidefiniteness of the matrix is naturally enforced. 
Solving~\eqref{eq:eq4} using local optimization methods is known as the Burer--Monteiro 
method and yields good results in practice: \citet{kulis2007fast} underlined the practical success of such low-rank approaches in particular for maximum variance unfolding and for k-means clustering (see also~\citep{carson2017kmeans}). Their approach is significantly faster and more scalable. However, the non-convexity of~\eqref{eq:eq4} means further analysis is needed to determine whether it can be solved to global optimality reliably.

For $\K = \R$, in the case where $\mathcal{M}$ is a compact, smooth manifold 
(see Assumption~\ref{ass:smoothmanifold} below for a precise condition), it has been shown recently that, up to a zero-measure set 
of cost matrices, second-order stationary points (SOSPs) of~\eqref{eq:eq4} are
globally optimal provided $\mathrm{dim}\;\Sy^{k\times k} > m$~\citep{boumal2016non,boumal2018deterministic}. 
Algorithms such as the Riemannian trust-regions method (RTR) converge globally to SOSPs, 
but unfortunately they can only guarantee \emph{approximate} satisfaction of second-order optimality
conditions in a finite number of iterations~\citep{boumal2016globalrates}.


The aforementioned papers close with a question, crucial in practice:
when is it the case that \emph{approximate} SOSPs,
which we now call ASOSPs, are approximately optimal? Building on recent proof techniques by~\citet{bhojanapalli2018smoothed}, we provide some answers here.

\subsection*{Contributions}

This paper formulates approximate global optimality conditions holding for~\eqref{eq:eq4} and, consequently, for~\eqref{eq:eq3}. Our results rely on the following core assumption as set in~\citep{boumal2016non}.


\begin{assumption}[Smooth manifold]\label{ass:smoothmanifold}
 For all values of $k$ up to $n$ such that $\mathcal{M}_k$ is non-empty, the constraints on~\eqref{eq:eq4} defined by $A_1,\dots,A_m\in \Sy^{n\times n}$ 
 and $b\in \R^m$  satisfy at least one of the following:
 \begin{enumerate} 
  \item  $\{A_1Y,\dots,A_mY\}$ are linearly independent in $\K^{n\times k} $ for all $Y\in \mathcal{M}_k$; or 
  \item $\{A_1Y,\dots,A_mY\}$ span a subspace of constant dimension in $\K^{n\times k} $ for all $Y$ in an open neighborhood of $\mathcal{M}_k$ in 
  $\K^{n\times k}.$
 \end{enumerate}
\end{assumption}
In~\citep{boumal2018deterministic}, it is shown that (a) if the assumption above is verified for $k = n$, then it automatically holds for all values of $k \leq n$ such that $\mathcal{M}_k$ is non-empty; and (b) for those values of $k$, the dimension of the subspace spanned by $\{A_1Y,\dots,A_mY\}$ is independent of $k$: we call it $m'$.


When Assumption~\ref{ass:smoothmanifold} holds, we refer to problems of the form~\eqref{eq:eq3} as \textit{smooth} SDPs because $\mathcal{M}$ is then a smooth manifold. Examples 
of smooth SDPs for $\K=\R$ are given in~\citep{boumal2018deterministic}. For $\K=\C$, we detail an example
in Section~\ref{sec:appli}. Our main theorem is a 
smooth analysis result (cf.\ Theorem~\ref{thm:main} for a more formal statement). An ASOSP is an \emph{approximate} SOSP (a precise definition follows.)


\begin{theorem}[Informal]\label{thm:inform}
 Let Assumption~\ref{ass:smoothmanifold} hold and assume $\mathcal{C}$ is compact. Randomly perturb the cost matrix $C$. With high probability, if $k=\tilde{\Omega}(\sqrt{m})$, any ASOSP $Y\in\K^{n\times k}$ for~\eqref{eq:eq4} is an approximate global optimum, and $X=YY^*$ is an approximate global optimum for~\eqref{eq:eq3} (with the perturbed $C$.)
\end{theorem}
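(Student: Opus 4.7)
The plan is to split the argument into three pieces: a deterministic ``rank-deficiency implies near-optimality'' lemma; a smoothed-analysis step that forces the smallest singular value $\sigma_k(Y)$ of any ASOSP to be small after perturbing $C$; and a transfer of the resulting optimality gap back to the original problem.

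\textbf{Step 1: Deterministic near-optimality from small $\sigma_k(Y)$.} Let $Y\in\mathcal{M}_k$ be an ASOSP, so that the Riemannian gradient of $Y\mapsto\dotprod{C,YY^*}$ has norm at most $\varepsilon_g$ and the Riemannian Hessian is lower bounded by $-\varepsilon_H$. From the gradient condition and Assumption~\ref{ass:smoothmanifold} I will extract an approximate Lagrange multiplier $\mu\in\R^m$ and form the slack $S=C-\A^*(\mu)$, so that $SY$ has small Frobenius norm. Picking a unit left singular vector $z$ of $Y$ corresponding to $\sigma_k(Y)$, the rank-one direction $\dot Y = ze_k^*$ lies at distance $O(\sigma_k(Y))$ from $T_Y\mathcal{M}_k$, so its tangent projection is an admissible test direction; plugging into the Hessian inequality yields $z^*Sz \ge -\varepsilon_H - C_1\sigma_k(Y)$. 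Extending from $z$ to an arbitrary unit vector by a rank-one completion trick gives $S\succeq -\varepsilon\, I$ with $\varepsilon = O(\varepsilon_H+\sigma_k(Y))$, and standard SDP weak duality together with compactness of $\mathcal{C}$ then produces
\[
\dotprod{C,YY^*} - \min_{X\in\mathcal{C}}\dotprod{C,X} \;\le\; C_2\bigl(\varepsilon_H + \sigma_k(Y) + \varepsilon_g\bigr),
\]
for a constant $C_2$ depending only on $\mathrm{diam}(\mathcal{C})$ and the operator norm of $\A$.

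\textbf{Step 2: Smoothed analysis forces $\sigma_k(Y)$ small.} Replace $C$ by $C_\sigma = C + \sigma G$, where $G\in\Sy^{n\times n}$ is a Gaussian Wigner matrix of appropriate scale. The first-order condition rearranges to $\sigma GY + (C-\A^*(\mu))Y = E$ with $\|E\|_F\le\varepsilon_g$. When $\sigma_k(Y)\ge\tau$, the map $G\mapsto GY$ is invertible on its range with condition number $O(\tau^{-1})$, so for fixed $(Y,\mu)$ the perturbations $G$ compatible with the first-order condition lie in an affine tube of half-width $O(\varepsilon_g/(\sigma\tau))$ inside a subspace of dimension at most $nk+m$. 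Covering the compact set of admissible candidates $(Y,\mu)$ by an $\eta$-net of cardinality $\exp(C_3\, nk\log(1/\eta))$ and applying a Gaussian small-ball bound in the remaining $\Omega(n^2-nk-m)$ orthogonal directions, a union bound yields that with probability $1-\delta$ every ASOSP satisfies $\sigma_k(Y)\le \tau$ as soon as
\[
k^2 \;\gtrsim\; m\cdot\mathrm{polylog}(n,1/\sigma,1/\tau,1/\delta),
\]
which is exactly the $k=\tilde\Omega(\sqrt m)$ regime announced in the statement.

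\textbf{Step 3: Transfer back and main obstacle.} Applying Steps~1 and~2 to the perturbed cost $C_\sigma$ shows that $X=YY^*$ is approximately optimal for the perturbed SDP. Since $|\dotprod{C-C_\sigma,X}|\le \sigma\|G\|_F\cdot\mathrm{diam}(\mathcal{C})$ holds uniformly on $\mathcal{C}$ with high probability, the bound transfers to~\eqref{eq:eq3} up to an additive $O(\sigma n)$ term, and tuning $\sigma$ balances this cost against the threshold implicit in Step~2. The step I expect to be the main obstacle is Step~2: maintaining a \emph{uniform} well-conditioned right inverse of $G\mapsto GY$ over the $\eta$-net, and cleanly projecting the rank-one test direction in Step~1 onto $T_Y\mathcal{M}_k$ without incurring curvature terms of unacceptable order. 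Assumption~\ref{ass:smoothmanifold}, through its constant-dimension clause, is precisely what makes the tangent-space projection Lipschitz with explicit constants, so the curvature bookkeeping should go through along the lines of the earlier deterministic arguments of Boumal et al.
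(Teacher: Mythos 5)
Your three-part architecture (a deterministic lemma showing small $\sigma_k(Y)$ implies $S$ is almost PSD, a smoothed-analysis step forcing $\sigma_k(Y)$ small, and a transfer back to the unperturbed problem) matches the paper's decomposition into Lemma~\ref{lem:almostpsd}, Lemma~\ref{lem:eigbd}, and Corollary~\ref{cor::gap}. The details of Steps~1 and~2 diverge, however, and Step~2 as sketched has a genuine gap.

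In Step~1 you obtain $z^*Sz \ge -\varepsilon_H - C_1\sigma_k(Y)$, \emph{linear} in $\sigma_k(Y)$. Lemma~\ref{lem:almostpsd} gets $\lambda_{\min}(S) \ge -\varepsilon_H - \zeta\norm{C}_{\mathrm{op}}\sigma_k(Y)^2$, \emph{quadratic}. The extra factor comes from a specific rewriting of the cross term: the paper takes $U=xz^*$ with $z$ a right singular vector achieving $\norm{Yz}=\sigma_k(Y)$, and observes that $\dotprod{U_{T^\perp}, SU} = \dotprod{(\A^*\circ G^\dagger\circ\A)(xz^*Y^*), Sxz^*Y^*}$, so $\norm{xz^*Y^*}=\sigma_k(Y)$ appears in \emph{both} arguments. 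Your naive Cauchy--Schwarz only sees one power. The linear bound still vanishes as $\sigma_k(Y)\to 0$, so the informal conclusion survives, but the rate in the final gap bound is worse, and the $ze_k^*$ direction you describe (with $z$ a left singular vector) is not the construction that makes the algebra close.

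Step~2 is where the argument breaks. You propose to cover the set of $(Y,\mu)$ pairs with an $\eta$-net of cardinality $\exp(C_3\, nk\log(1/\eta))$ and use a Gaussian small-ball bound. Two issues. First, the constraint $\norm{\sigma GY - D_0}\lesssim\varepsilon_g$ pins $G$ in roughly $nk$ directions (those determined by $G\mapsto GY$ when $Y$ has rank $k$); the small-ball probability is controlled by those $\approx nk$ constrained directions, not by the ``remaining $\Omega(n^2-nk-m)$ orthogonal directions.'' Second, and more fundamentally, a union bound over a net of size $\exp(\tilde O(nk))$ against a per-point failure probability of order $(\mathrm{width})^{nk}$ trades $nk$ against $nk$ and produces no dependence on $m$ at all, so it cannot yield the announced threshold $k=\tilde\Omega(\sqrt{m})$. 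The paper's mechanism is different: it covers only the slack matrices $M=C-\A^*(\mu)$, which live in an affine subspace of dimension $\mathrm{rank}(\mathcal{A})\le m$, so the net has cardinality $\exp(\tilde O(m))$; and it bounds $\sigma_k(Y)$ \emph{without} covering $Y$, because a thin SVD of $Y$ in the first-order condition gives $\varepsilon_g \ge 2\sigma_k(Y)\bigl(\sum_{i=1}^k \sigma_{n-i+1}(M+W)^2\bigr)^{1/2}$. The eigenvalue repulsion result (Corollary~\ref{cor:nguyen}, after Nguyen) then lower-bounds the tail singular values of $M+W$ with failure probability $\exp(-\Theta(k^2))$, and the scaling $k^2 \gtrsim m\cdot\mathrm{polylog}$ is exactly the balance between that $k^2$ and the $\exp(\tilde O(m))$-size net. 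Your sketch has no $k^2$-versus-$m$ tradeoff, so the $\sqrt{m}$ conclusion does not follow from the counting as written.

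Step~3 matches Corollary~\ref{cor::gap}.
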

The high probability proviso is with respect to the perturbation only: if the perturbation is ``good'', then all ASOSPs are as described in the statement. If $\mathcal{C}$ is compact, then so is $\mathcal{M}$ and known algorithms for 
optimization on manifolds produce an ASOSP in finite time (with explicit bounds). Theorem~\ref{thm:inform} ensures that, for $k$ large enough and for any 
cost matrix $C$, with high probability upon a random perturbation of $C$, such algorithms produce an approximate global optimum of~\eqref{eq:eq4}. 

Theorem~\ref{thm:inform} is a corollary of two intermediate arguments, developed in Lemmas~\ref{lem:eigbd} and~\ref{lem:almostpsd}: 

\begin{enumerate}
 \item Probabilistic argument (Lemma~\ref{lem:eigbd}): By perturbing the cost matrix in the objective function of~\eqref{eq:eq4} with a Gaussian Wigner matrix, with high probability, any approximate first-order stationary point $Y$ of the perturbed problem~\eqref{eq:eq4} is almost {column-rank deficient}. 
 \item Deterministic argument (Lemma~\ref{lem:almostpsd}): If an approximate second-order stationary point $Y$ for~\eqref{eq:eq4} is also almost {column-rank deficient}, then it is an approximate global optimum and $X=YY^{*}$ is an approximate global optimum for~\eqref{eq:eq3}.
\end{enumerate}

The first argument is motivated by \textit{smoothed analysis}~\citep{SmoothedAnalysis} 
and draws heavily on a recent paper by~\citet{bhojanapalli2018smoothed}. 
The latter work introduces smoothed analysis to analyze the performance of the Burer--Monteiro factorization, 
but it analyzes a quadratically penalized version of the SDP: its solutions do not satisfy constraints exactly. This affords more generality, but, for the special class of smooth SDPs, the present work has the advantage of analyzing an exact formulation.
The second argument is a smoothed extension of well-known on-off results~\citep{burer2003nonlinear,burer2005local,journee2010low}. 
Implications of this theorem for a particular SDP are derived in Section~\ref{sec:appli}, with applications to phase retrieval and angular synchronization.

Thus, for smooth SDPs, our results improve upon~\citep{bhojanapalli2018smoothed} in that we address exact-feasibility formulations of the SDP. Our results also improve upon~\citep{boumal2016non} by providing approximate optimality results for approximate second-order points with relaxation rank $k$ scaling only as $\tilde \Omega(\sqrt{m})$, whereas the latter reference establishes such results only for $k = n+1$. Finally, we aim for more generality by covering both real and complex SDPs, and we illustrate the relevance of complex SDPs in Section~\ref{sec:appli}.


\subsection*{Related work}

A number of recent works focus on large-scale SDP solvers.
Among the direct approaches (which proceed in the convex domain directly), \citet{SPARSESDP} introduced a Frank--Wolfe type method for a restricted class of SDPs. Here, the key is that each iteration increases the rank of the solution only by one, so that if only a few iterations are required to reach satisfactory accuracy, then only low dimensional objects need to be manipulated. This line of work was later improved by \citet{HybridDSP}, \citet{FastSpecta} and \citet{SublinearSDP} through hybrid methods.
Still, if high accuracy solutions are desired, a large number of iterations will be required, eventually leading to large-rank iterates. In order to overcome such issue, \citet{SketchyDecisions} recently proposed to combine conditional gradient and sketching techniques in order to maintain a low rank representation of the iterates.

Among the low-rank approaches, our work is closest to (and indeed largely builds upon) recent results of~\citet{bhojanapalli2018smoothed}. For the real case, they consider a penalized version of problem~\eqref{eq:eq3} (which we here refer to as  (P-SDP)) and its related penalized Burer--Monteiro formulation, here called (P-P). With high probability upon random perturbation of the cost matrix, they show approximate global optimality of ASOSPs for (P-P), assuming $k$ grows with $\sqrt{m}$ and either the SDP is compact or its cost matrix is positive definite. 
Given that there is a zero-measure set of SDPs where SOSPs may be suboptimal, there can be a small-measure set of SDPs where ASOSPs are not approximately optimal~\citep{bhojanapalli2018smoothed}. In this context, the authors resort to smoothed analysis, in the same way that we do here.
One drawback of that work is that the final result does not hold for the original SDP, but for a non-equivalent penalized version of it. This is one of the points we improve here, by focusing on smooth SDPs as defined in~\citep{boumal2016non}.

\subsection*{Notation}

We use $\K$ to refer to $\R$ or $\C$ when results hold for both fields. 
For matrices $A$, $B$ of same size, we use the inner product 
$\dotprod{A, B} = \Rea{\Tr{A^*B}}$, which reduces to $\dotprod{A, B} = \Tr{A^{T}B}$ in the real case. 
The associated Frobenius norm is defined as $\lVert A \rVert =  \sqrt{\dotprod{A,A}}$.
For a linear map $f$ between matrix spaces, this yields a subordinate operator norm as 
$\|f\|_{\mathrm{op}} = \sup_{A\neq 0}\frac{\|f(A)\|}{\|A\|}$. 
The set of self-adjoint matrices of size $n$ over $\K$, $\Sy^{n \times n}$,
is the set of symmetric matrices for $\K = \R$ or the set of Hermitian matrices for $\K = \C$. We also write $\Herm^{n\times n}$ to denote $\Sy^{n \times n}$ for $\K = \C$.
A self-adjoint matrix $X$ is positive semidefinite ($X \succeq 0$) if and only if $u^{*}Xu \geq 0$ for all $u \in \K^{n}$. 
Furthermore, $I$ is the identity operator and $I_{n}$ is the identity matrix of size $n$.
The integer $m'$ is defined after Assumption~\ref{ass:smoothmanifold}.

\section{Geometric framework and near-optimality conditions}\label{sec:geometry}



In this section, we present properties of the smooth geometry of~\eqref{eq:eq4} and approximate global 
optimality conditions for this problem. In covering these preliminaries, we largely parallel developments in~\citep{boumal2016non}.
As argued in that reference, Assumption~\ref{ass:smoothmanifold} implies that the search space 
$\mathcal{M}$ of~\eqref{eq:eq4} is a submanifold in $\mathbb{K}^{n\times k}$ of codimension $m'$. 
We can associate tangent spaces to a submanifold.
Intuitively, the tangent space $\mathrm{T}_Y\mathcal{M}$ to the submanifold $\mathcal{M}$  at a point $Y\in\mathcal{M}$ is a subspace that best approximates $\mathcal{M}$ around $Y$, when the subspace origin is translated to $Y$. 
It is obtained by linearizing  the equality constraints. 
\begin{lemma}[{\protect\citet[Lemma 2.1]{boumal2018deterministic}}]
Under Assumption~\ref{ass:smoothmanifold}, the tangent space at $Y$ to 
$\mathcal{M}$~\eqref{eq:M}, denoted by $\mathrm{T}_Y\mathcal{M}$, is:
\begin{align}
 \mathrm{T}_Y\mathcal{M} & = \left\{ \dot{Y}\in\K^{n\times k}\colon \mathcal{A}(\dot{Y}Y^{*}+Y\dot{Y}^{*})=0\right\} \nonumber\\
						 & = \left\{\dot{Y}\in\K^{n\times k}\colon \dotprodsmall{A_iY,\dot{Y}}=0 \, \textrm{ for } \, i=1,\dots,m\right\}.
 \label{eq:TYM}
\end{align}
\end{lemma}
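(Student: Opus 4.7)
The plan is to view $\mathcal{M}_k$ as the zero set of the constraint map
$F\colon \K^{n\times k}\to\R^m$ defined by $F(Y) = \mathcal{A}(YY^{*}) - b$, and to compute its tangent space as the kernel of $\mathrm{D}F(Y)$. A direct product-rule calculation shows
\[
 \mathrm{D}F(Y)[\dot Y] \;=\; \mathcal{A}\!\left(\dot Y Y^{*} + Y\dot Y^{*}\right),
\]
which is the object appearing in the first description of $\mathrm{T}_Y\mathcal{M}$.

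Next, I would invoke Assumption~\ref{ass:smoothmanifold} to justify that $\mathcal{M}_k$ really is a submanifold with $\mathrm{T}_Y\mathcal{M}_k = \ker \mathrm{D}F(Y)$. Under case~1, the adjoint of $\mathrm{D}F(Y)$ sends $\mu\in\R^{m}$ to $\sum_i \mu_i A_i Y$; linear independence of $\{A_1 Y,\ldots,A_m Y\}$ means $\mathrm{D}F(Y)$ is surjective, so $0$ is a regular value and the regular value theorem yields the desired submanifold structure and tangent space. Under case~2, $\mathrm{D}F$ has locally constant rank on an open neighborhood of $\mathcal{M}_k$, and the constant-rank theorem delivers the same conclusion: $\mathcal{M}_k$ is a submanifold and $\mathrm{T}_Y\mathcal{M}_k = \ker \mathrm{D}F(Y)$.

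For the second equality, the task is just to rewrite $\mathcal{A}(\dot Y Y^{*} + Y\dot Y^{*})_i = \dotprodsmall{A_i, \dot Y Y^{*} + Y\dot Y^{*}}$ as an inner product between $A_i Y$ and $\dot Y$. Using the cyclic property of the trace, self-adjointness $A_i^{*}=A_i$, and the identity $\Rea{\Tr{M}} = \Rea{\Tr{M^{*}}}$, both terms evaluate to $\Rea{\Tr{(A_i Y)^{*} \dot Y}} = \dotprodsmall{A_i Y, \dot Y}$, giving
\[
 \mathcal{A}\!\left(\dot Y Y^{*} + Y\dot Y^{*}\right)_i \;=\; 2\,\dotprodsmall{A_i Y,\dot Y}.
\]
Hence $\dot Y \in \ker\mathrm{D}F(Y)$ iff $\dotprodsmall{A_iY,\dot Y}=0$ for all $i$, matching the second description.

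The only real subtlety is case~2 of Assumption~\ref{ass:smoothmanifold}: there $\mathrm{D}F(Y)$ is not necessarily surjective, so one must appeal to the constant-rank theorem rather than the regular-value theorem, and verify that the relevant rank condition holds in a full neighborhood of $\mathcal{M}_k$ in $\K^{n\times k}$ (not just on $\mathcal{M}_k$) — this is precisely what the assumption grants. Everything else reduces to the two short computations above, and the statement then matches~\citep{boumal2018deterministic} verbatim in the complex as well as the real setting.
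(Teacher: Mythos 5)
Your proof is correct and follows the standard route: compute $\mathrm{D}F(Y)$ for the defining map $F(Y)=\mathcal{A}(YY^*)-b$, invoke the regular-value theorem under case~1 and the constant-rank theorem under case~2 of Assumption~\ref{ass:smoothmanifold} to conclude $\mathrm{T}_Y\mathcal{M}=\ker\mathrm{D}F(Y)$, and then simplify $\mathcal{A}(\dot Y Y^*+Y\dot Y^*)_i = 2\dotprodsmall{A_iY,\dot Y}$ using the cyclic property of the trace, self-adjointness of $A_i$, and $\Rea{\Tr{M}}=\Rea{\Tr{M^*}}$. Note that the paper itself does not reprove this lemma but cites it from~\citep{boumal2018deterministic}; your argument matches the standard proof given there, including the crucial observation that case~2 must supply constant rank on a full open neighborhood of $\mathcal{M}_k$ rather than only on $\mathcal{M}_k$ itself.
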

By equipping each tangent space with a restriction of the inner product $\langle \cdot,\cdot\rangle$, we turn 
$\mathcal{M}$ into a Riemannian submanifold of $\K^{n\times k}$. 
We also introduce the orthogonal projector $\mathrm{Proj}_{Y}\colon \K^{n\times k}\rightarrow\mathrm{T}_Y\mathcal{M}$ which, given a matrix $Z\in \K^{n\times k}$, 
projects it to the tangent space $\mathrm{T}_Y\mathcal{M}$: 
\begin{equation}
 \mathrm{Proj}_{Y}Z:=\underset{\dot{Y}\in\mathrm{T}_Y\mathcal{M}}{\text{argmin}}\;\|\dot{Y}-Z\|.
\end{equation}
This projector will be useful to phrase optimality conditions. It is characterized as follows.  
\begin{lemma}[{\protect\citet[Lemma 2.2]{boumal2018deterministic}}]\label{lem:orthproj}
 Under Assumption~\ref{ass:smoothmanifold}, the orthogonal projector admits the closed form
 \begin{align*}
 	\mathrm{Proj}_{Y}Z = Z-\mathcal{A}^*\left(G^{\dagger}\mathcal{A}(ZY^{*}) \right)Y, 
 \end{align*}
 where $\mathcal{A}^* \colon \R^m\rightarrow \Sy^{n\times n}$ is the adjoint of $\mathcal{A},$ $G$ is a Gram matrix defined by $G_{ij}=\dotprod{A_iY,A_jY}$ (it is a function of $Y$), and 
 $G^{\dagger}$ denotes the Moore--Penrose pseudo-inverse of $G$ (differentiable in $Y$).
\end{lemma}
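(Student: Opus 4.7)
The plan is to recognize that the closed form is nothing more than the standard formula for orthogonal projection onto the complement of a finite-dimensional subspace, with the extra care needed because the vectors $A_1Y,\dots,A_mY$ may be linearly dependent (case~2 of Assumption~\ref{ass:smoothmanifold}).

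First I would rewrite the characterization of $\mathrm{T}_Y\mathcal{M}$ from \eqref{eq:TYM} as the orthogonal complement in $\K^{n\times k}$ of the subspace
\begin{equation*}
V_Y := \mathrm{span}\{A_1Y,\dots,A_mY\}.
\end{equation*}
Hence $\mathrm{Proj}_Y = I - P_{V_Y}$, and it suffices to show that $P_{V_Y} Z = \mathcal{A}^*(G^\dagger \mathcal{A}(ZY^*))\, Y$. Writing any candidate projection as $\sum_i c_i A_i Y = \mathcal{A}^*(c)\, Y$ for some $c\in\R^m$, the orthogonality condition $\dotprodsmall{A_jY, Z - \sum_i c_i A_iY}=0$ for all $j$ becomes $Gc = b_Z$, where $b_Z \in \R^m$ has entries $(b_Z)_j = \dotprodsmall{A_jY,Z}$. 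The key identity to verify is
\begin{equation*}
(b_Z)_j = \dotprodsmall{A_jY,Z} = \Rea{\Tr{Y^*A_j Z}} = \Rea{\Tr{A_j (ZY^*)}} = \dotprodsmall{A_j, ZY^*} = \mathcal{A}(ZY^*)_j,
\end{equation*}
using that each $A_j$ is self-adjoint and cyclicity of the trace; so the linear system is $Gc = \mathcal{A}(ZY^*)$.

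Next I would handle the possible singularity of $G$. A short argument shows $\ker G = \{\alpha : \sum_i \alpha_i A_i Y = 0\}$ (if $G\alpha=0$, then $\alpha^\top G \alpha = \|\sum_i \alpha_i A_iY\|^2 = 0$). For any such $\alpha$, $\alpha^\top \mathcal{A}(ZY^*) = \dotprodsmall{\sum_i \alpha_i A_iY, Z}=0$, so $\mathcal{A}(ZY^*) \perp \ker G = \ker G^\top$, i.e.\ $\mathcal{A}(ZY^*)\in\mathrm{range}(G)$. Consequently $c := G^\dagger \mathcal{A}(ZY^*)$ solves $Gc = \mathcal{A}(ZY^*)$, and any other solution $c'$ differs by an element of $\ker G$, which does not change $\mathcal{A}^*(c)Y$ by the characterization of $\ker G$. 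Hence $P_{V_Y}Z = \mathcal{A}^*(G^\dagger \mathcal{A}(ZY^*))\, Y$, proving the closed form.

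The last item is smoothness of $Y \mapsto G^\dagger$. I expect this to be the main subtlety, since the pseudo-inverse is not continuous in general. Under case~1 of Assumption~\ref{ass:smoothmanifold}, $G(Y)$ is invertible for all $Y\in\mathcal{M}$, so $G^\dagger = G^{-1}$ is smooth. Under case~2, the $\{A_iY\}$ span a subspace of constant dimension on a neighborhood of $\mathcal{M}$, so $G(Y)$ has locally constant rank; restricted to the smooth manifold of matrices of fixed rank, the Moore--Penrose pseudo-inverse is a smooth map, and composing with $Y\mapsto G(Y)$ (itself polynomial in $Y$) gives smoothness of $Y \mapsto G(Y)^\dagger$. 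The constant-rank step is the only nontrivial piece; everything else is bookkeeping.
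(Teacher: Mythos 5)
Your proof is correct and follows essentially the same route as the paper's: characterize the normal space as $\operatorname{span}\{A_1Y,\dots,A_mY\}$, write the normal component as $\mathcal{A}^*(\mu)Y$, reduce to the Gram system $G\mu = \mathcal{A}(ZY^*)$, select $\mu = G^\dagger\mathcal{A}(ZY^*)$, and invoke local constant rank of $G$ for smoothness of $G^\dagger$ (the paper cites \citet[Thm.~4.3]{golub1973differentiation} for this last step). Your extra care in verifying that $\mathcal{A}(ZY^*)\in\mathrm{range}(G)$ and that any solution modulo $\ker G$ yields the same projection is a welcome explicitness; the paper dispatches this more tersely with ``by construction, this system has at least one solution.''
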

(See a proof in Appendix~\ref{sec:prooforthoproj}.)
To properly state the approximate first- and second-order necessary optimality conditions for~\eqref{eq:eq4},
we further need the notions of \textit{Riemannian gradient} and \textit{Riemannian Hessian} on the manifold $\mathcal{M}$. We recall that~\eqref{eq:eq4} minimizes
the function $g$, defined by
\begin{equation}\label{eq:objfct4}
	g(Y) = \dotprod{CY,Y},
\end{equation}
on the manifold $\mathcal{M}$. The Riemannian gradient of $g$ at $Y$, $\mathrm{grad}\,g(Y)$, is the unique tangent vector 
at $Y$ such that, for all tangent $\dot{Y}$, $\dotprodsmall{\mathrm{grad}\,g(Y),\dot{Y}}=\dotprodsmall{\nabla g(Y),\dot{Y}},$ with $\nabla g(Y)=2CY$ the Euclidean (classical) gradient of $g$ evaluated at $Y$.
Intuitively, $\mathrm{grad}\,g(Y)$ is the tangent vector at $Y$ that points in the steepest ascent direction for $g$ as seen from the manifold's perspective. 
A classical result states that, for Riemannian submanifolds, the Riemannian gradient is given by the projection of the classical gradient to the tangent space~\citep[eq.~(3.37)]{absil2009optimization}:
\begin{equation} 
 \mathrm{grad}\,g(Y)=\mathrm{Proj}_{Y}(\nabla g(Y))=2\left(C-\A^*\left(G^{\dagger}\A(CYY^{*}) \right) \right)Y.
\end{equation}
This leads us to define the matrix $S\in\Sy^{n\times n}$ which plays a key role to guarantee approximate global optimality
for problem~\eqref{eq:eq4}, as discussed in Section~\ref{sec:smoothanal}:
\begin{equation}\label{eq:S}
 S = S(Y) = C-\A^*(\mu) = C - \sum_{i=1}^{m}\mu_{i}A_{i},
\end{equation}
where $\mu=\mu(Y)=G^{\dagger}\A(CYY^{*}).$ 
We can write the Riemannian gradient of $g$ evaluated at $Y$ as
\begin{equation}\label{eq:Riem_grad} 
 \mathrm{grad}\,g(Y)=2SY.
\end{equation}
The Riemannian gradient enables us to define an approximate first-order necessary optimality condition below. To define the approximate second-order necessary optimality condition, we need to introduce the notion of Riemannian Hessian. The Riemannian Hessian of $g$ at $Y$ is a self-adjoint 
operator on the tangent space at $Y$ obtained as the projection of the derivative of the Riemannian gradient vector
field~\citep[eq.~(5.15)]{absil2009optimization}. \cite{boumal2018deterministic} give a closed form expression for the 
Riemannian Hessian of $g$ at $Y$:
\begin{equation}\label{eq:Riem_hess}
 \mathrm{Hess}\; g(Y)[\dot{Y}] =2\cdot\mathrm{Proj}_{Y}(S\dot{Y}).
\end{equation}
We can now formally define the approximate necessary optimality conditions for problem~\eqref{eq:eq4}. 
\begin{definition}[$\varepsilon_g$-FOSP]\label{def:fosp}
 $Y\in \mathcal{M}$ is an $\varepsilon_g$--first-order stationary point for~\eqref{eq:eq4} if the norm of the Riemannian gradient of $g$ at $Y$ almost vanishes, specifically,
 \begin{equation*}
  \norm{\mathrm{grad}\,g(Y)}=\norm{2SY}\leq \varepsilon_g,
 \end{equation*}
 where $S$ is defined as in equation~\eqref{eq:S}.  
\end{definition}
\begin{definition}[$(\varepsilon_g,\varepsilon_H)$-SOSP]\label{def:approxsosp}
 $Y\in \mathcal{M}$ is an ($\varepsilon_g, \varepsilon_H$)--second-order stationary point for~\eqref{eq:eq4} if it is an $\varepsilon_g$--first-order
 stationary point and the Riemannian Hessian of $g$ at $Y$ is almost positive semidefinite, specifically, 
  \begin{equation*}
  \forall \dot{Y}\in\mathrm{T}_Y\mathcal{M},\qquad \dfrac{1}{2}\dotprod{\dot{Y},\mathrm{Hess}\; g(Y)[\dot{Y}]} =  \dotprodsmall{\dot{Y}, S\dot{Y}} 
  \geq -\varepsilon_H \lVert \dot{Y} \rVert^{2}. 
 \end{equation*}
\end{definition}
From these definitions, it is clear that $S$ encapsulates the approximate optimality conditions of problem~\eqref{eq:eq4}.



\section{Approximate second-order points and smoothed analysis}\label{sec:smoothanal}


%

We state our main results formally in this section. As announced, following~\citep{bhojanapalli2018smoothed}, we resort to smoothed analysis~\citep{SmoothedAnalysis}. To this end, we consider perturbations of the cost matrix $C$ of~\eqref{eq:eq3} by a \textit{Gaussian Wigner matrix}. Intuitively, smoothed analysis tells us how large the variance of the perturbation should be in 
order to obtain a new SDP which, with high probability, is sufficiently distant from any pathological case.
We start by formally defining the notion of Gaussian Wigner matrix, following~\citep{arous2011wigner}.

\begin{definition}[Gaussian Wigner matrix]\label{def:wigner}
	The random matrix $W = W^*$ in $\Sy^{n\times n}$ is a \emph{Gaussian Wigner matrix} with variance $\sigma_W^2$ if its entries on and above the diagonal are independent, zero-mean Gaussian variables (real or complex depending on context) with variance $\sigma_W^2$.
%
%
\end{definition}

Besides Assumption~\ref{ass:smoothmanifold}, another important assumption for our results is that the search space $\mathcal{C}$~\eqref{eq:C} of~\eqref{eq:eq3} is compact. In that scenario, there exists a finite constant $R$ such that
\begin{equation}\label{boundTrace}
\forall X\in\mathcal{C}, \quad \Tr{X}\leq R. 
\end{equation}
Thus, for all $Y\in\mathcal{M}$, $\norm{Y}^2 = \Tr{YY^*} \leq R$. Another consequence of compactness of $\mathcal{C}$ is that the operator $\mathcal{A}^*\circ G^{\dagger}\circ\mathcal{A}$ is uniformly bounded, that is, there exists a finite constant $K$ such that
\begin{equation}\label{boundProof}
	 \forall Y\in\mathcal{M}, \quad \norm{\mathcal{A}^*\circ G^{\dagger}\circ\mathcal{A}}_{\mathrm{op}} \leq K,
\end{equation}
where $G^\dagger$ is a continuous function of $Y$ as in Lemma~\ref{lem:orthproj}. We give explicit expressions for the constants $R$ and $K$ for the case of phase retrieval in Section~\ref{sec:appli}.




We now state the main theorem, whose proof is in Appendix~\ref{sec:proofmain}.
\begin{theorem}\label{thm:main}
Let Assumption~\ref{ass:smoothmanifold} hold for~\eqref{eq:eq3} with cost matrix $C \in \Sy^{n\times n}$ and $m$ constraints. 
 Assume $\mathcal{C}$~\eqref{eq:C} is compact, and let $R$ and $K$ be as in~\eqref{boundTrace} and~\eqref{boundProof}. 
 Let $W$ be a Gaussian Wigner matrix with variance $\sigma_W^2$ and let $\delta \in (0, 1)$ be any tolerance. 
 Define $\kappa$ as:
 \begin{align}
 	\kappa = \kappa(R, K, C, n, \sigma_{W}) & = R K \left(\norm{C}_{\mathrm{op}}+3\sigma_{W}\sqrt{n}\right).
 	\label{eq:kappa}
 \end{align}
 There exists a universal constant $c_0$ such that, if the rank $k$ for the low-rank problem~\eqref{eq:eq4} satisfies
 \begin{align}
 	k & \geq 3\left[ \log(n) + \sqrt{\log(1/\delta)} + \sqrt{m\cdot\log\left(1+\frac{6 \kappa\sqrt{c_0n}}{\sigma_W} \right)} \right],
 	\label{eq:rankcondition}
 \end{align}
 then, with probability at least $1-\delta - e^{-\frac{n}{2}}$ on the random matrix $W$, any $(\varepsilon_g,\varepsilon_H)$-SOSP $Y\in\K^{n\times k}$ of~\eqref{eq:eq4} with perturbed cost matrix $C+W$ has bounded optimality gap:
	\begin{align}
		0 \leq  g(Y) - f^{\star} \leq ({\varepsilon_H}+\varepsilon_g^2\eta)R + {\frac{\varepsilon_{g}}{2}}  \sqrt{R},
	\end{align}
	with $g$ the cost function of~\eqref{eq:eq4}, $f^{\star}$ the optimal value of~\eqref{eq:eq3} (both perturbed), and
	\begin{align}
		\eta & = \eta(R, K, C, n, m, \sigma_W) = \frac{c_0 n K(2+KR)^2 \left(\norm{C}_{\mathrm{op}} +  3\sigma_{W}\sqrt{n})\right)}{9m \sigma_W^2\log\left(1+\frac{6 \kappa\sqrt{c_0n}}{\sigma_W} \right)}.
		\label{eq:eta}
	\end{align}
\end{theorem}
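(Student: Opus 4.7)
The plan is to derive Theorem~\ref{thm:main} as a corollary of the two intermediate arguments the paper advertises: a probabilistic bound (Lemma~\ref{lem:eigbd}) saying that, after the Wigner perturbation, every approximate FOSP of~\eqref{eq:eq4} must be almost column-rank deficient, and a deterministic bound (Lemma~\ref{lem:almostpsd}) saying that an approximate SOSP that is almost column-rank deficient has an $S$ matrix which is almost positive semidefinite. Once both ingredients are in hand, Theorem~\ref{thm:main} reduces to a short weak-duality calculation using the certificate $S=C+W-\A^*(\mu)$.

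First I would set up the dual certificate computation. For every $X\in\mathcal{C}$ one has $\dotprodsmall{C+W,X}=\dotprodsmall{S,X}+\dotprodsmall{\mu,\A(X)}=\dotprodsmall{S,X}+\dotprodsmall{\mu,b}$, and similarly $g(Y)=\dotprodsmall{S,YY^{*}}+\dotprodsmall{\mu,b}$. Subtracting gives
\begin{equation*}
g(Y)-f^{\star}\;=\;\dotprodsmall{S,YY^{*}}\;-\;\min_{X\in\mathcal{C}}\dotprodsmall{S,X}.
\end{equation*}
The first term satisfies $|\dotprodsmall{S,YY^{*}}|=|\dotprodsmall{SY,Y}|\leq\|SY\|\cdot\|Y\|\leq\tfrac{\varepsilon_g}{2}\sqrt{R}$ by the $\varepsilon_g$-FOSP condition~\eqref{eq:Riem_grad} and the trace bound~\eqref{boundTrace}. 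For the second term, if $S\succeq -\varepsilon I$ then $\dotprodsmall{S,X}\geq -\varepsilon\Tr{X}\geq -\varepsilon R$ on $\mathcal{C}$. Combining, $g(Y)-f^{\star}\leq \tfrac{\varepsilon_g}{2}\sqrt{R}+\varepsilon R$, which matches the claimed bound with $\varepsilon=\varepsilon_H+\varepsilon_g^{2}\eta$.

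Next I would invoke Lemma~\ref{lem:almostpsd} to pin down this $\varepsilon$. The second-order condition in Definition~\ref{def:approxsosp} gives $\dotprodsmall{\dot Y,S\dot Y}\geq -\varepsilon_H\|\dot Y\|^{2}$ only along tangent directions. To promote this to a bound on $\lambda_{\min}(S)$, for an arbitrary unit vector $u\in\K^{n}$ and the unit right-singular vector $v\in\K^{k}$ of $Y$ associated with $\sigma_{\min}(Y)$, one tests with $\dot Y=uv^{*}$. This direction is only \emph{almost} tangent: from~\eqref{eq:TYM} the tangency defects $\dotprodsmall{A_i Y,uv^{*}}$ are controlled by $\|Yv\|=\sigma_{\min}(Y)$, and correcting $\dot Y$ to its orthogonal projection $\mathrm{Proj}_{Y}\dot Y$ using Lemma~\ref{lem:orthproj} introduces an additive error of order $K\sigma_{\min}(Y)$. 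Plugging this perturbed tangent into the SOSP inequality, using the $\varepsilon_g$-FOSP identity to re-express cross-terms in $SY$, and optimizing over $u$ yields $\lambda_{\min}(S)\geq -\varepsilon_H-C_1\sigma_{\min}(Y)^{2}$ for a constant $C_1$ of the form appearing in~\eqref{eq:eta}.

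Finally I would invoke Lemma~\ref{lem:eigbd}, the smoothed-analysis step, to control $\sigma_{\min}(Y)$. The argument, modelled on~\citep{bhojanapalli2018smoothed}, is a union bound: an $\varepsilon_g$-FOSP of the perturbed cost $C+W$ forces $(C+W)Y-\A^{*}(\mu)Y$ to be small, so $W$ must lie close to an affine subspace determined by $Y$ and $\mu$; anti-concentration for a Gaussian Wigner matrix against a fixed subspace, combined with an $\varepsilon$-net over the low-dimensional set of candidate $(Y,\mu)$ pairs of rank $k$, produces the probability of failure $\delta+e^{-n/2}$ and shows that, under~\eqref{eq:rankcondition}, $\sigma_{\min}(Y)^{2}\leq c_1\varepsilon_g^{2}\eta/C_1$ with high probability. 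Substituting back gives $S\succeq -(\varepsilon_H+\varepsilon_g^{2}\eta)I$ and closes the argument. I expect the union-bound/net construction in Lemma~\ref{lem:eigbd} to be the most delicate step: the net must be taken in the joint $(Y,\mu)$ space with cardinality controlled by $k$ rather than $nk$, which is what forces the $k\gtrsim\sqrt{m\log(\cdot)}$ scaling in~\eqref{eq:rankcondition}; the dimension count uses that on the smooth manifold $\mathcal{M}_k$ the multiplier $\mu=G^{\dagger}\A(CYY^{*})$ lies in an $m'$-dimensional subspace, and that the operator-norm tail bound $\|W\|_{\mathrm{op}}\leq 3\sigma_W\sqrt{n}$ (with probability $1-e^{-n/2}$) lets us restrict attention to a bounded range of $\mu$ and $\|C+W\|$ when building $\kappa$ as in~\eqref{eq:kappa}.
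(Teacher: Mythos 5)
Your assembly of Theorem~\ref{thm:main} from Lemmas~\ref{lem:eigbd} and~\ref{lem:almostpsd} is correct and takes essentially the same route as the paper: a weak-duality computation with the certificate $S = C + W - \mathcal{A}^*(\mu)$ gives $g(Y) - f^\star = \dotprodsmall{S, YY^*} - \min_{X \in \mathcal{C}} \dotprodsmall{S, X} \leq \tfrac{\varepsilon_g}{2}\sqrt{R} - \lambda_{\min}(S)\,R$, after which Lemma~\ref{lem:almostpsd} controls $\lambda_{\min}(S)$ in terms of $\sigma_k(Y)^2$ and Lemma~\ref{lem:eigbd} controls $\sigma_k(Y)$ (and $\|W\|_{\mathrm{op}}$) with high probability, exactly as in Appendix~\ref{sec:proofmain}. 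One small inaccuracy in your sketch of Lemma~\ref{lem:almostpsd}: the paper does not invoke the $\varepsilon_g$-FOSP identity to handle the cross-terms $\dotprodsmall{U_{T^\perp}, SU}$; instead it uses that $\|xz^*Y^*\| = \|Yz\| = \sigma_k(Y)$ (since $z$ is the bottom right singular vector) together with Cauchy--Schwarz and the uniform bounds $\|\mathcal{A}^* \circ G^\dagger \circ \mathcal{A}\|_{\mathrm{op}} \leq K$, $\|S\|_{\mathrm{op}} \leq (1+KR)\|C\|_{\mathrm{op}}$, which is what makes $\sigma_k(Y)$ appear quadratically; this does not affect the validity of your derivation of the theorem from the two lemmas as stated.
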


 
 
 
 
This result indicates that, as long as the rank $k$ is on the order of $\sqrt{m}$ (up to logarithmic factors), the optimality gap in the \emph{perturbed} problem is small if a sufficiently good \emph{approximate} second-order point is computed. Since~\eqref{eq:eq3} may admit a unique solution of rank as large as $\Theta(\sqrt{m})$ (see for example~\citep[Thm.~3.1(ii)]{laurent1996facial} for the Max-Cut SDP), we conclude that the scaling of $k$ with respect to $m$ in Theorem~\ref{thm:main} is essentially optimal.

There is an incentive to pick $\sigma_W$ small, since the optimality gap is phrased in terms of the perturbed problem. As expected though, taking $\sigma_W$ small comes at a price. Specifically, the required rank $k$ scales with $\sqrt{\log(1/\sigma_W)}$, so that a smaller $\sigma_W$ may require $k$ to be a larger multiple of $\sqrt{m}$. Furthermore, the optimality gap is bounded in terms of $\eta$ with a dependence in $\varepsilon_g^2 / \sigma_W^2$; this may force us to compute more accurate approximate second-order points (smaller $\varepsilon_g$) for a similar guarantee when $\sigma_W$ is smaller: see also Corollary~\ref{cor::gap} below.

%

As announced, the theorem rests on two arguments which we now present---a probabilistic one, and a deterministic one:
\begin{enumerate}
 \item Probabilistic argument: In the smoothed analysis framework, we show, for $k$ large enough, that $\varepsilon_g$-FOSPs of~\eqref{eq:eq4} 
 have their smallest singular value near zero, with high probability upon perturbation of $C$. This implies that such points are almost column-rank deficient. 
 \item Deterministic argument: If $Y$ is an $(\varepsilon_g,\varepsilon_H)$-SOSP of~\eqref{eq:eq4} and it is almost column-rank deficient,
 then the matrix $S(Y)$ defined in equation~\eqref{eq:S} is almost positive semidefinite. From there, 
 we can derive a bound on the optimality gap.
\end{enumerate}
Formal statements for both follow, building on the notation in Theorem~\ref{thm:main}. Proofs are in Appendices~\ref{sec:proofeigbd} and~\ref{sec:proofalmostpsd}, with supporting lemmas in Appendix~\ref{lem:LBSV}.
\begin{lemma}\label{lem:eigbd}
Let Assumption~\ref{ass:smoothmanifold} hold for~\eqref{eq:eq3}. 
Assume $\mathcal{C}$~\eqref{eq:C} is compact.
Let $W$ be a Gaussian Wigner matrix with variance $\sigma_W^2$ and let $\delta \in (0, 1)$ be any tolerance. 
There exists a universal constant $c_0$ such that, if the rank $k$ for the low-rank problem~\eqref{eq:eq4} is lower bounded as in~\eqref{eq:rankcondition}, then, with probability at least $1 - \delta - e^{-\frac{n}{2}}$ on the random matrix $W$, we have
\begin{align*}
	\norm{W}_{\mathrm{op}} & \leq 3\sigma_{W}\sqrt{n},
	\label{eq:boundnormWop}
\end{align*}
and furthermore: any $\varepsilon_g$-FOSP $Y\in\K^{n\times k}$ of~\eqref{eq:eq4} with perturbed cost matrix $C+W$ satisfies
\begin{align*}
	 \sigma_k(Y)\leq \frac{\varepsilon_g}{\sigma_W}\frac{\sqrt{c_0 n}}{k}, 
\end{align*}
where $\sigma_k(Y)$ is the $k$th singular value of the matrix $Y$.
%
\end{lemma}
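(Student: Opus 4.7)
The lemma has two independent parts: an operator-norm bound on the Gaussian Wigner matrix $W$, and a smoothed-analysis bound on the smallest singular value of any $\varepsilon_g$-FOSP $Y$. For the first part, $\|W\|_{\mathrm{op}} \leq 3\sigma_W \sqrt{n}$ with probability at least $1 - e^{-n/2}$ is a standard concentration result for Gaussian Wigner matrices (e.g.\ via matrix Bernstein or via a Bandeira--van Handel estimate). I would simply quote such a result and condition on this good event throughout the rest of the argument.

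The main work is in the second part. Fix any $\varepsilon_g$-FOSP $Y$, with multiplier $\mu = \mu(Y)$ and $S(Y) = C + W - \mathcal{A}^*(\mu)$, so that $\|S(Y) Y\| \leq \varepsilon_g/2$. Writing the thin SVD $Y = U\Sigma V^*$ and using
\[
  \|S Y\|^2 \;=\; \sum_{j=1}^k \sigma_j(Y)^2 \, \|S u_j\|^2 \;\geq\; \sigma_k(Y)^2 \, \|S U\|^2,
\]
we obtain the deterministic implication: if $\sigma_k(Y) > \tau$, then $\|(C + W - \mathcal{A}^*\mu)\, U\| \leq \varepsilon_g/(2\tau)$. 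Using the formula $\mu = G^{\dagger} \mathcal{A}((C+W) YY^*)$ from Lemma~\ref{lem:orthproj}, the bound $\|YY^*\| \leq R$, the assumption $\|\mathcal{A}^* \circ G^{\dagger} \circ \mathcal{A}\|_{\mathrm{op}} \leq K$, and $\|C+W\|_{\mathrm{op}} \leq \|C\|_{\mathrm{op}} + 3\sigma_W\sqrt{n}$, I bound $\|\mathcal{A}^*(\mu)\|_{\mathrm{op}} \leq \kappa$ with $\kappa$ as in~\eqref{eq:kappa}. Setting $\tau = \varepsilon_g\sqrt{c_0 n}/(\sigma_W k)$ and $r = \varepsilon_g/(2\tau)$, it then suffices to show that, with probability at least $1 - \delta$,
\[
  \forall\; U \in \mathrm{St}(n, k), \; \forall\; M \in \mathrm{Im}(\mathcal{A}^*) \cap \overline{B}(\kappa), \qquad \|(C + W - M)\, U\| > r.
\]

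The probabilistic argument is then a covering-plus-small-ball union bound. The admissible $M$ lie in a subspace of dimension at most $m$ inside a ball of radius $\kappa$, producing an $\epsilon_M$-net of cardinality at most $(3\kappa/\epsilon_M)^m$. For the random part, rotational invariance of the Gaussian Wigner distribution ensures that, for any fixed isometry $U \in \K^{n\times k}$, the matrix $U^* W U$ is itself a Gaussian Wigner on $\Sy^{k\times k}$ with variance $\sigma_W^2$; its density is bounded pointwise by $(c/\sigma_W)^{d_k}$ with $d_k = \dim \Sy^{k\times k}$ of order $k^2$. Combined with $\|U^*(C+W-M)U\| \leq \|(C+W-M)U\|$, this yields the small-ball estimate $\mathbf{P}[\|U^*(C+W-M)U\| \leq r] \leq (c r / \sigma_W)^{d_k}$ for each fixed pair $(U,M)$. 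Taking a $U$-net whose width is tuned to the Lipschitz constant of $(U,M)\mapsto U^*(C+W-M)U$ (controlled by $\|C\|_{\mathrm{op}}+\|W\|_{\mathrm{op}}+\kappa$) and composing with the $M$-net, the total bad-event probability drops below $\delta$ exactly once $k$ exceeds a quantity of order $\sqrt{m\, \log(1+6\kappa\sqrt{c_0 n}/\sigma_W)} + \log n + \sqrt{\log(1/\delta)}$, matching~\eqref{eq:rankcondition}.

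The main obstacle is balancing the small-ball exponent against the net cardinalities: a naive net for $U$ on the Stiefel manifold would cost $(C/\epsilon)^{nk}$ and overwhelm the $d_k \asymp k^2$ small-ball factor. Resolving this requires using the $k$ left singular vectors of $Y$ \emph{simultaneously}---through $U^*WU$ rather than one direction at a time---so that the Gaussian small-ball absorbs essentially all of the $U$-dependence, leaving only the $m$-dimensional multiplier $\mu$ to be netted in a substantive way and the $U$-net contributing merely logarithmic corrections. A single-direction argument would produce only a one-dimensional small ball, and even the union bound over $\mu$ alone would then fail; the scaling $k = \tilde{\Omega}(\sqrt{m})$ is precisely what makes $k^2 \gtrsim m$. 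Extending from $\K = \R$ to $\K = \C$ is routine, as $U^*WU$ becomes a GUE matrix with $d_k = k^2$ in place of $k(k+1)/2$, affecting only universal constants.
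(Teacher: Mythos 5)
Your first part (the operator-norm bound on $W$) and your deterministic reduction to $\|SU\|_F \leq \varepsilon_g/(2\sigma_k(Y))$ are correct and match the paper. The gap is in your probabilistic argument, specifically in how you claim to handle the covering over $U$.

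You correctly observe that a naive $\epsilon$-net over the Stiefel manifold of $U$'s would have cardinality $(C/\epsilon)^{\Theta(nk)}$, which overwhelms the small-ball exponent $d_k \asymp k^2$ unless $k \gtrsim n$. This is a real obstruction, not a minor issue: the union bound requires the small-ball exponent to dominate the log-cardinality of all nets, and $nk \gg k^2$ whenever $k \ll n$. Your claimed resolution---that ``the Gaussian small-ball absorbs essentially all of the $U$-dependence'' so the $U$-net contributes only logarithmic corrections---is asserted, not argued. Rotational invariance makes the \emph{law} of $U^*WU$ independent of $U$, but it does not produce a bound simultaneous over all $U$: the bad set $\{W : \exists U, \|U^*(C+W-M)U\|\le r\}$ is genuinely much larger than the bad set for a single $U$, and no mechanism you describe bounds its measure. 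In particular, the scaling $k \asymp \sqrt{m}$ in \eqref{eq:rankcondition} allows $k \ll n$, so your union bound fails exactly in the regime the lemma is supposed to cover.

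The paper sidesteps this entirely with a deterministic inequality that is \emph{uniform} over the left singular frame. Writing $Y = P\Sigma Q^*$ with $P$ having orthonormal columns, one has
\begin{equation*}
\varepsilon_g \geq \norm{2(M+W)Y} \geq 2\sigma_k(Y)\norm{(M+W)P} \geq 2\sigma_k(Y)\sqrt{\textstyle\sum_{i=1}^k \sigma_{n-i+1}(M+W)^2},
\end{equation*}
the last step being the Ky Fan--type fact that $\min_{P^*P=I_k}\|AP\|_F^2$ equals the sum of the $k$ smallest squared singular values of $A$. This removes the frame from the problem: the only object that must be netted is $M$, which ranges over an affine set of dimension $\mathrm{rank}(\mathcal{A}) \leq m$ inside a ball of radius $\kappa$, and the corresponding net has log-cardinality $\Theta(m\log(1+\kappa/\epsilon))$, consistent with $k\asymp\sqrt{m}$. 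The price is that one now needs anti-concentration for the $k$ \emph{smallest} singular values of the full $n\times n$ matrix $\bar M+W$, which a pointwise density bound does not provide; the paper invokes the Nguyen-type eigenvalue repulsion estimate (Corollary~\ref{cor:nguyen} and Lemma~\ref{lem:bound_eigs}), whose $\exp(-\Theta(k^2))$ exponent is the replacement for your $(cr/\sigma_W)^{d_k}$ bound. So your route is not merely a variant of the paper's: dropping the uniform-over-$P$ inequality forces a $U$-net that, as you partly notice, cannot be paid for, and the proposed fix does not actually close that gap.
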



\begin{lemma}\label{lem:almostpsd}
Let Assumption~\ref{ass:smoothmanifold} hold for~\eqref{eq:eq3} with cost matrix $C$. 
Assume $\mathcal{C}$ is compact.
Let $Y\in \K^{n\times k}$ be an $(\varepsilon_g, \varepsilon_H)$-SOSP of~\eqref{eq:eq4} (for any $k$). 
 Then, the smallest eigenvalue of $S = S(Y)$~\eqref{eq:S} is bounded below as
 \begin{align*}
 	\lambda_{\min}(S) & \geq -\varepsilon_H - \zeta  \norm{C}_{\mathrm{op}} \cdot \sigma_k^2(Y), 
 \end{align*}
 where $\zeta = K(2+KR)^2$ with $R, K$ as in~\eqref{boundTrace} and~\eqref{boundProof}, and $\sigma_k(Y)$ is the $k$th singular value of $Y$ (it is zero if $k > n$). This holds deterministically for any cost matrix $C$.
\end{lemma}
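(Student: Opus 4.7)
The plan is to fix an arbitrary unit vector $u\in\K^n$ and construct a tangent variation $\tilde Y\in \mathrm{T}_Y\mathcal{M}$ whose Rayleigh quotient $\langle \tilde Y, S\tilde Y\rangle$ reproduces $u^{*}Su$ up to an error quadratic in $\sigma_k(Y)$. Applying Definition~\ref{def:approxsosp} to $\tilde Y$ and minimizing over unit $u$ will then yield the desired lower bound on $\lambda_{\min}(S)$. Concretely, let $v\in\K^k$ be a unit right-singular vector of $Y$ associated with $\sigma_k(Y)$; if $k>n$, choose any unit $v$ in $\ker Y$, so $\sigma_k(Y)=0$. Set $\dot Y = uv^{*}$, which has unit Frobenius norm, and take its projection, which by Lemma~\ref{lem:orthproj} equals
\[
\tilde Y = \mathrm{Proj}_Y(\dot Y) = \dot Y - PY,\qquad P := \mathcal{A}^{*}\bigl(G^{\dagger}\mathcal{A}(u(Yv)^{*})\bigr).
\]
Orthogonal projections are non-expansive, so $\|\tilde Y\|\leq\|\dot Y\|=1$.

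The central observation is that $u(Yv)^{*}$ has Frobenius norm $\sigma_k(Y)$, so by~\eqref{boundProof} we have $\|P\|_{\mathrm{op}}\leq\|P\|\leq K\sigma_k(Y)$, and independently $\|Yv\|=\sigma_k(Y)$. Expanding
\[
\langle \tilde Y, S\tilde Y\rangle = u^{*}Su - 2\mathrm{Re}\bigl(u^{*}SP(Yv)\bigr) + \langle PY, SPY\rangle,
\]
the cross term is bounded by $\|S\|_{\mathrm{op}}\|P\|_{\mathrm{op}}\|Yv\|\leq\|S\|_{\mathrm{op}}K\sigma_k^{2}(Y)$, and the quadratic term by $\|S\|_{\mathrm{op}}\|P\|_{\mathrm{op}}^{2}\|Y\|^{2}\leq\|S\|_{\mathrm{op}}K^{2}R\sigma_k^{2}(Y)$, using $\|Y\|^{2}\leq R$ from~\eqref{boundTrace}. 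A short calculation from the definition~\eqref{eq:S} of $S$, using~\eqref{boundProof} and $\|YY^{*}\|\leq R$, yields the uniform bound $\|S\|_{\mathrm{op}}\leq\|C\|_{\mathrm{op}}(1+KR)$, so the combined correction is at most $K(1+KR)(2+KR)\|C\|_{\mathrm{op}}\sigma_k^{2}(Y)\leq\zeta\|C\|_{\mathrm{op}}\sigma_k^{2}(Y)$.

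Putting everything together, the second-order condition $\langle \tilde Y, S\tilde Y\rangle\geq -\varepsilon_H\|\tilde Y\|^{2}$ combined with $\|\tilde Y\|^{2}\leq 1$ and the bound on the correction yields $u^{*}Su\geq -\varepsilon_H - \zeta\|C\|_{\mathrm{op}}\sigma_k^{2}(Y)$, and taking the infimum over unit $u$ gives the stated bound on $\lambda_{\min}(S)$; note that the first-order condition plays no explicit role. The main delicacy is ensuring the cross term remains $O(\sigma_k^{2}(Y))$ rather than $O(\sigma_k(Y))$: one factor comes from $\|Yv\|=\sigma_k(Y)$ appearing explicitly on the right of $SP(Yv)$, and a second factor must be extracted from $\|P\|_{\mathrm{op}}$ via the input $u(Yv)^{*}$ to $\mathcal{A}$, which again has norm $\sigma_k(Y)$. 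Missing this double contribution would leave a linear $\sigma_k(Y)$ residual and weaken the conclusion.
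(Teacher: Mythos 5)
Your proposal is correct and follows essentially the same argument as the paper's proof: you form $U = uv^*$ (the paper's $xz^*$) with $v$ a right singular vector of $Y$ for its smallest singular value, split $U$ into its tangent and normal components via Lemma~\ref{lem:orthproj}, apply the second-order condition to the tangent part, and bound the normal correction terms using $\norm{u(Yv)^*} = \sigma_k(Y)$, $\|Y\| \le \sqrt{R}$, $K$, and $\|S\|_{\mathrm{op}} \le (1+KR)\|C\|_{\mathrm{op}}$, arriving at the same constant $\zeta = K(2+KR)^2$. The only cosmetic difference is that the paper groups the cross and quadratic terms as $2\langle U_{T^\perp}, SU\rangle - \langle U_{T^\perp}, SU_{T^\perp}\rangle$ while you expand $\langle U - PY, S(U-PY)\rangle$ directly; both yield identical bounds.
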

Combining the two above lemmas, the key step in the proof of Theorem~\ref{thm:main} is to deduce a bound on the optimality gap from a bound on the smallest eigenvalue of $S$: see Appendix~\ref{sec:proofmain}.

We have shown in Theorem~\ref{thm:main} that a perturbed version of~\eqref{eq:eq4} can be approximately solved to global optimality, with high probability on the perturbation. In the corollary below, we further bound the optimality gap at the approximate solution of the perturbed problem with respect to the original, unperturbed problem. The proof is in Appendix~\ref{sec:proofcorollary}.
\begin{corollary}\label{cor::gap}
	Assume $\mathcal{C}$ is compact and let $R$ be as defined in~\eqref{boundTrace}. Let $X \in \mathcal{C}$ be an approximate solution for~\eqref{eq:eq3} with perturbed cost matrix $C+W$, so that the optimality gap in the perturbed problem is bounded by $\varepsilon_f$. Let $f^\star$ denote the optimal value of the unperturbed problem~\eqref{eq:eq3}, with cost matrix $C$. Then, the optimality gap for $X$ with respect to the unperturbed problem is bounded as:
	\begin{align*}
		0 \leq \dotprod{C, X} - f^\star \leq \varepsilon_f + 2\|W\|_{\mathrm{op}}R.
	\end{align*}
	Under the conditions of Theorem~\ref{thm:main}, with the prescribed probability, $\varepsilon_f$ and $\|W\|_{\mathrm{op}}$ can be bounded so that for an $(\varepsilon_g, \varepsilon_H)$-SOSP $Y$ of the perturbed problem~\eqref{eq:eq4} we have:
	\begin{align*}
		0 \leq \dotprod{CY, Y} - f^\star \leq ({\varepsilon_H}+\varepsilon_g^2\eta)R + {\frac{\varepsilon_{g}}{2}}  \sqrt{R} + 6\sigma_W\sqrt{n}R,
	\end{align*}
	where $\eta$ is as defined in~\eqref{eq:eta} and $\sigma_W^2$ is the variance of the Wigner perturbation $W$.
\end{corollary}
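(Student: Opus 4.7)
The plan is to reduce both statements to a short telescoping argument: comparing the cost of $X$ across the perturbed and unperturbed SDPs costs only $\norm{W}_{\mathrm{op}}$ times the largest trace over $\mathcal{C}$.

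Let $\tilde{f}^{\star}$ denote the optimal value of the perturbed SDP, that is $\min_{X \in \mathcal{C}} \dotprod{C+W, X}$. First I would write the decomposition
\begin{align*}
	\dotprod{C, X} - f^{\star} \; = \; \bigl(\dotprod{C+W, X} - \tilde{f}^{\star}\bigr) + \bigl(\tilde{f}^{\star} - f^{\star}\bigr) - \dotprod{W, X}.
\end{align*}
The first bracket is at most $\varepsilon_f$ by hypothesis. For the middle bracket, pick any unperturbed optimizer $X^{\star} \in \mathcal{C}$ (which exists by compactness of $\mathcal{C}$); feasibility of $X^{\star}$ in the perturbed problem yields $\tilde{f}^{\star} \leq \dotprod{C+W, X^{\star}} = f^{\star} + \dotprod{W, X^{\star}}$, hence $\tilde{f}^{\star} - f^{\star} \leq \dotprod{W, X^{\star}}$. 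To finish, I use the elementary fact that for any $Z \succeq 0$ with $\Tr{Z} \leq R$ the spectral decomposition gives $|\dotprod{W, Z}| \leq \norm{W}_{\mathrm{op}} \Tr{Z} \leq \norm{W}_{\mathrm{op}} R$; applying this to both $X$ and $X^{\star}$ produces the first inequality of the corollary. The lower bound $0 \leq \dotprod{C, X} - f^{\star}$ is immediate from $X \in \mathcal{C}$.

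For the second statement, set $X = YY^{*}$: since $Y \in \mathcal{M}$, one has $X \in \mathcal{C}$ with $\Tr{X} = \norm{Y}^2 \leq R$. Theorem~\ref{thm:main} then supplies $\varepsilon_f = (\varepsilon_H + \varepsilon_g^2 \eta) R + (\varepsilon_g/2) \sqrt{R}$ as a valid optimality-gap bound in the perturbed problem, while Lemma~\ref{lem:eigbd} supplies the operator-norm bound $\norm{W}_{\mathrm{op}} \leq 3\sigma_W \sqrt{n}$. Substituting both into the first inequality yields the advertised estimate, with $2\norm{W}_{\mathrm{op}} R \leq 6 \sigma_W \sqrt{n}\, R$.

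The argument is routine; the only point that requires real care is the probability bookkeeping in the second statement. I would verify that the operator-norm bound on $W$ and the SOSP optimality-gap bound are realized on the \emph{same} high-probability event used in Theorem~\ref{thm:main} (which is the case by inspection of Lemma~\ref{lem:eigbd}, since both conclusions there are stated jointly), so that the probability $1 - \delta - e^{-n/2}$ is preserved without any further union bound.
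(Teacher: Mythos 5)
Your proof is correct and takes essentially the same route as the paper: the paper also telescopes through the perturbed optimal value (writing it as $f(C+W)$ and bounding the perturbation terms via the auxiliary quantity $h(\pm W) = \max_{X \in \mathcal{C}} \dotprod{\pm W, X} \leq \|W\|_{\mathrm{op}} R$), which matches your use of $\tilde f^\star$ and the bound $|\dotprod{W,Z}| \leq \|W\|_{\mathrm{op}}\Tr{Z}$ for $Z \succeq 0$. Your remark that the operator-norm bound on $W$ and the optimality-gap bound hold on the same high-probability event is a point the paper leaves implicit, and is indeed justified since Lemma~\ref{lem:eigbd} states both conclusions jointly.
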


\section{Applications}\label{sec:appli}

The approximate global optimality results established in the previous section can be applied to deduce guarantees on the quality of ASOSPs 
of the low-rank factorization for a number of SDPs that appear in machine learning problems. 
Of particular interest, we focus on the phase retrieval problem. 
This problem consists in retrieving a signal $z \in \C^d$ from $n$ amplitude measurements
$b = |Az| \in \R_{+}^{n}$ (the absolute value of vector $Az$ is taken entry-wise). If we can recover the complex phases of $Az$, then $z$ can be estimated through linear least-squares. Following this approach, \citet{waldspurger2015phase} argue that this task can be modeled as the following non-convex problem:
\begin{equation}\tag{PR}\label{phaseretrieval}
\begin{aligned}
& \underset{u \in \C^{n}}{\text{min}}
& &  u^*Cu \\
& \text{subject to}
& & |u_{i}| = 1, \textrm{ for } i = 1, \ldots, n,\\
\end{aligned}
\end{equation}
where $C = \mathrm{diag}(b)(I - AA^{\dagger})\mathrm{diag}(b)$ and
$\mathrm{diag}\colon\R^{n}\rightarrow \Herm^{n\times n}$ maps a vector to the corresponding diagonal matrix.
The classical relaxation is to rewrite the above in terms of $X = uu^*$ (lifting) without enforcing $\mathrm{rank}(X) = 1$, leading to a complex SDP which \citet{waldspurger2015phase} call PhaseCut:
\begin{equation}\tag{PhaseCut}\label{eq:eq9}
\begin{aligned}
& \underset{X\in \Herm^{n\times n}}{\text{min}}
& & \dotprod{C,X}\\
& \text{subject to}
& & \mathrm{diag}(X)=1, \\
& & & X\succeq 0.
\end{aligned}
\end{equation}
The same SDP relaxation also applies to a problem called angular synchronization~\citep{singer2011angular}.
The Burer--Monteiro factorization of $\eqref{eq:eq9}$ is 
an optimization problem over a matrix $Y\in\C^{n\times k}$ as follows:
\begin{equation}\tag{PhaseCut-BM}\label{eq:eq10}
\begin{aligned}
& \underset{Y\in\C^{n\times k} }{\text{min}}
& & \dotprod{CY,Y}\\
& \text{subject to}
& & \mathrm{diag}(YY^*) = 1. 
\end{aligned}
\end{equation}
For a feasible $Y$, each row has unit norm: the search space is a Cartesian product of spheres in $\C^k$, which is a smooth manifold.
We can check that Assumption~\ref{ass:smoothmanifold} holds for all $k \geq 1$. Furthermore, the feasible space of the SDP is compact. 
Therefore, Theorem~\ref{thm:main} applies.

In this setting, $\Tr{X} = n$ for all feasible $X$, and $\norm{\mathcal{A}^*\circ G^{\dagger}\circ\mathcal{A}}_{\mathrm{op}} = 1$ for all feasible $Y$ (because $G = G(Y) = I_m$ for all feasible $Y$---see Lemma~\ref{lem:orthproj}---and $\mathcal{A}^* \circ \mathcal{A}$ is an orthogonal projector from Hermitian matrices to diagonal matrices). For this reason, the constants defined in~\eqref{boundTrace} and~\eqref{boundProof} can be set to $R = n$ and $K = 1$.

As a comparison, \citet{mei2017solving} also provide an optimality gap for ASOSPs of~\eqref{eq:eq9} without perturbation. Their result is more general in the sense that it holds for all possible values of $k$. However, 
when $k$ is large, it does not accurately capture the fact that SOSPs are optimal, thus incurring a larger bound on the optimality gap of 
ASOSPs. In contrast, our bounds do show that for $k$ large enough, as $\varepsilon_g, \varepsilon_H$ go to zero, 
the optimality gap goes to zero, with the trade-off that they do so for a perturbed problem (though see Corollary~\ref{cor::gap}), with high probability.

\subsection*{Numerical Experiments}


We present the empirical performance of the low-rank approach in the case of~\eqref{eq:eq9}. 
We compare it with a dedicated interior-point method (IPM)
implemented by~\cite{helmberg1996interior} for real SDPs and adapted to phase retrieval as done by \cite{waldspurger2015phase}. 
This adaptation involves splitting the real and the imaginary parts of the variables in~\eqref{eq:eq9} and forming an equivalent  real 
SDP with double the dimension. The Burer--Monteiro approach (BM) is implemented in complex form directly using Manopt, a toolbox for optimization on manifolds~\citep{JMLR:v15:boumal14a}.
In particular, a Riemannian Trust-Region method (RTR) is used~\citep{genrtr}. Theory supports that these methods can return an ASOSP in a finite number of iterations~\citep{boumal2016globalrates}.
We stress that the SDP is \emph{not} perturbed in these experiments: the role of the perturbation in the analysis is to understand why the low-rank approach is so successful in practice despite the existence of pathological cases. In practice, we do not expect to encounter pathological cases.

Our numerical experiment setup is as follows. We seek to recover a signal of dimension $d$, $z\in \C^d$, from $n$ measurements encoded in the 
vector $b\in \R_+^n$ such that $b = |Az|+\epsilon$, where $A\in \C^{n\times d}$ is the sensing matrix and $\epsilon\sim\mathcal{N}(0,\mathrm{I}_{d})$ 
is standard Gaussian noise. For the numerical experiments, we generate the vectors $z$ as complex random vectors with 
i.i.d.\ standard Gaussian entries, and we randomly generate the complex sensing matrices $A$ also 
with i.i.d.\ standard Gaussian entries. We do so for values of $d$ ranging from 10 to 1000, 
and always for $n = 10d$ 
(that is, there are 10 magnitude measurements per unknown complex coefficient, which is an oversampling factor of 5.) 
Lastly, we generate the measurement vectors $b$ as described above and we cap its values from below at $0.01$ in order to 
avoid small (or even negative) magnitude measurements. 

For $n$ up to 3000, 
both methods solve the same problem, and indeed produce the same answer up to small discrepancies. 
The BM approach is more accurate, at least in satisfying the constraints,
and, for $n = 3000$, it is also about 40 times faster than IPM. BM is run with 
$k = \sqrt{n}$ (rounded up), which is expected to be generically sufficient to include the global
optimum of the SDP (as confirmed in practice). For larger values of $n$, the IPM ran into memory issues and we had to abort the process.



\begin{figure}
	\centering
	\includegraphics[width=0.8\linewidth]{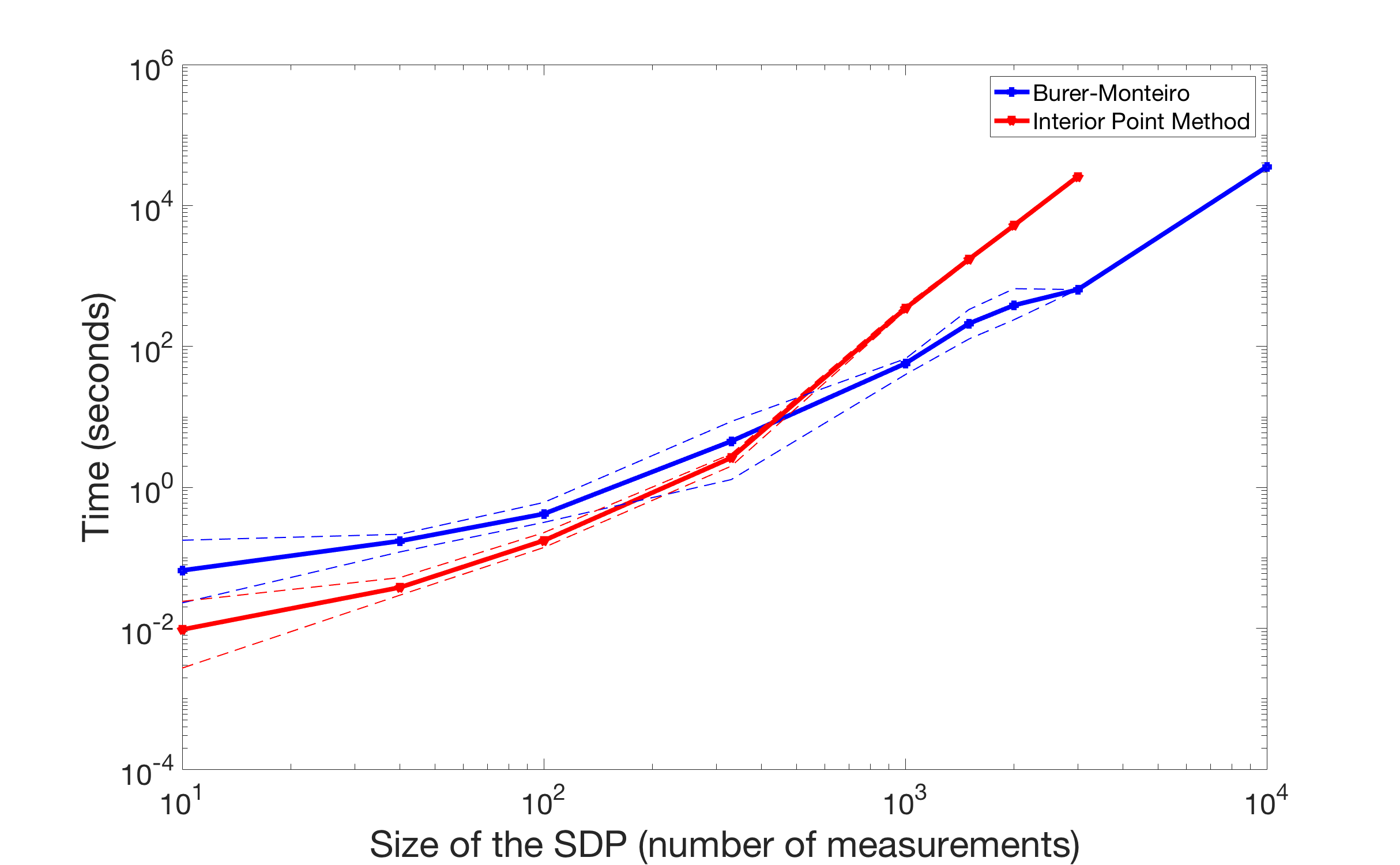}
	\caption{Computation time of the dedicated interior-point method (IPM) and of the Burer--Monteiro approach (BM) to solve~\eqref{eq:eq9}.
		For increasing values of $n$ (horizontal axis), we display the computation time averaged over four independent realizations of the problem (vertical axis). The smallest and largest observed computation times are represented with dashed lines. At $n = 3000$, BM is about $40$ times faster than IPM. For the largest value of $n$, IPM runs out of memory.}
	\label{comparison}
\end{figure}

\section{Conclusion}

%

We considered the low-rank (or Burer--Monteiro) approach to solve equality-constrained SDPs. Our key assumptions are that (a) the search space of the SDP is compact, and (b) the search space of its low-rank version is smooth (the actual condition is slightly stronger). Under these assumptions, we proved using smoothed analysis that, provided $k = \tilde{\Omega}(\sqrt{m})$ where $m$ is the number of constraints, if the cost matrix is perturbed randomly, with high probability, approximate second-order stationary points of the perturbed low-rank problem map to approximately optimal solutions of the perturbed SDP. We also related optimality gaps in the perturbed SDP to optimality gaps in the original SDP. Finally, we applied this result to an SDP relaxation of phase retrieval (also applicable to angular synchronization).

\section*{Acknowledgments}

NB is partially supported by NSF award DMS-1719558.

\bibliography{biblio}

\clearpage

\appendix

\section{Proof of Lemma \ref{lem:orthproj}} \label{sec:prooforthoproj}

We follow the proof of~\citep[Lemma 2.2]{boumal2018deterministic} and reproduce it here to be self-contained, and also because the reference treats only the real case; writing the proof here explicitly allows to verify that, indeed, all steps go through for the complex case as well.

Orthogonal projection is along the normal space, 
so that $\mathrm{Proj}_{Y} Z$ is in $\mathrm{T}_{Y}\mathcal{M}$~\eqref{eq:TYM} and $Z - \mathrm{Proj}_{Y} Z$ is in $\mathrm{N}_{Y} \mathcal{M}$, where the normal space at $Y$ is (using Assumption~\ref{ass:smoothmanifold})
\begin{align}
	\mathrm{N}_Y\mathcal{M} & = \left\{ Z \in \K^{n \times k} : \langle Z, \dot Y \rangle = 0 \ \forall \dot Y \in \mathrm{T}_Y\mathcal{M} \right\} = \operatorname{span}\{ A_1Y, \ldots, A_mY \}.
	\label{eq:NYM}
\end{align}
From the latter we infer there exists $\mu \in \R^{m}$ such that
\begin{align*}
	Z - \mathrm{Proj}_Y Z = \sum_{i=1}^{m}\mu_{i}A_{i}Y = \mathcal{A}^{*}(\mu)Y,
\end{align*}
since the adjoint of $\mathcal{A}$ is  $\mathcal{A}^{*}(\mu) = \mu_{1}A_{1} + \cdots + \mu_{m}A_{m}$. Multiply on the right by $Y^{*}$ and apply $\mathcal{A}$ to obtain
$$
\mathcal{A}(ZY^{*}) = \mathcal{A}(\mathcal{A}^{*}(\mu)YY^{*}),
$$
where we used $\mathcal{A}(\mathrm{Proj}_{Y}(Z)Y^{*}) = 0$ since $\mathrm{Proj}_{Y}(Z) \in \mathrm{T}_{Y}\mathcal{M}$.
The right-hand side expands into
$$
\mathcal{A}(\mathcal{A}^{*}(\mu)YY^{*})_{i} = \dotprod{A_{i},\sum_{j=1}^{m}\mu_{j}A_{j}YY^{*}}  = \sum_{j=1}^{m} \dotprod{A_{i}Y,A_{j}Y}\mu_{j} = (G\mu)_{i},
$$
where $G$ is a real, positive semidefinite matrix of size $m$ defined by $G_{ij} = \dotprod{A_iY, A_jY}$.
By construction, this system of equations in $\mu$ has at least one solution; we single out
$\mu = G^{\dagger}\mathcal{A}(ZY^{*})$, where $G^\dagger$ is the Moore--Penrose pseudo-inverse of $G$. The function $Y \mapsto G^{\dagger}$ is continuous and differentiable at $Y \in \mathcal{M}$ provided $G$ has constant rank in an open neighborhood of $Y$ in $\K^{n \times k}$~\citep[Thm 4.3]{golub1973differentiation}, which is the case for all $Y \in \mathcal{M}$ under Assumption~\ref{ass:smoothmanifold}.
\section{Lower-bound for smallest singular values}\label{lem:LBSV}




This appendix provides supporting results necessary for Appendix~\ref{sec:proofeigbd}, which is devoted to the proof of Lemma~\ref{lem:eigbd}. 
The statements we need are established for $\K = \R$ in~\citep[Cor.~5, Lem.~7]{bhojanapalli2018smoothed}. Here we give the corresponding statements for $\K = \C$: the proofs are essentially the same.
%

We first state a special case of Corollary 1.17 from~\citep{nguyen2017random}. Here, $N_I(X)$ denotes the number of eigenvalues of $X \in \Sy^{n \times n}$ in the real interval $I$.
(Note that the reference covers the real case in its main statement, and addresses the complex case later on as a remark.) For Gaussian Wigner matrices, we follow Definition~\ref{def:wigner}. Furthermore, $\mathbb{P}\left\{E\right\}$ denotes the probability of event $E$.
\begin{corollary} \label{cor:nguyen}
 Let $\overline{M}$ be a deterministic Hermitian matrix of size $n$. Let $\overline{W}$ be a Gaussian Wigner matrix with variance 1.
 Then, for any given $0 < \gamma < 1$, there exists a constant $c = c(\gamma)$ such that for any
$\varepsilon > 0$ and $k \geq 1$, with $I$ being the interval $\left[-\dfrac{\varepsilon k}{\sqrt{n}},\dfrac{\varepsilon k}{\sqrt{n}}\right]$,
$$
\mathbb{P}\left\{ N_{I}(\overline{M} + \overline{W}) \geq k \right\} \leq n^{k} \left(\dfrac{c\varepsilon}{\sqrt{2\pi}}\right)^{(1 - \gamma)k^{2}/2}.
$$
\end{corollary}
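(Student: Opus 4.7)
The statement is flagged as a direct specialization of Corollary~1.17 in \citep{nguyen2017random}, so my plan has essentially two steps. First, I would quote the general statement from that reference, which bounds $\mathbb{P}\{N_J(\overline{M}+\overline{W})\geq k\}$ for a deterministic Hermitian $\overline{M}$, a normalized Wigner $\overline{W}$, and an arbitrary short real interval $J$, with a right-hand side of the shape $n^k$ (coming from an $\varepsilon$-net over $k$-dimensional subspaces) times a small-ball factor of the shape $(c\,|J|\sqrt{n}/k)^{(1-\gamma)k^2/2}$ (coming from anti-concentration on the $\Theta(k^2)$ independent entries of $\overline{W}$ that control the restriction of $\overline{M}+\overline{W}$ to a $k$-dimensional subspace). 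Second, I would set $J = [-\varepsilon k/\sqrt{n},\,\varepsilon k/\sqrt{n}]$ and absorb universal factors into $c(\gamma)$ so that the right-hand side takes exactly the claimed form $n^k(c\varepsilon/\sqrt{2\pi})^{(1-\gamma)k^2/2}$.

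The only real work is to check that Nguyen's bound, whose main statement is phrased for real symmetric Wigner matrices, applies verbatim to the complex Hermitian setting. The reference addresses this as a remark, and the underlying proof is structural: it combines (i) an $\varepsilon$-net argument on the Grassmannian of $k$-planes, and (ii) a Littlewood--Offord / small-ball estimate for Gaussian quadratic forms. Both ingredients have complex analogues that differ from their real counterparts only by absolute multiplicative constants (covering number of the complex Grassmannian, anti-concentration for complex Gaussians), which can be folded into $c(\gamma)$ without changing the functional form of the bound.

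The main potential obstacle, therefore, is auditing Nguyen's proof to ensure that no step implicitly relies on real structure in a way that would degrade the exponent $(1-\gamma)k^2/2$. If such a step appears, the remedy is to rerun the same estimate using its complex analogue and track constants; since a $k\times k$ Hermitian restriction carries $k^2$ real degrees of freedom as opposed to $\binom{k+1}{2}$ in the real case, nothing essential is lost in the anti-concentration step, and the exponent survives. Once this transfer is in place the corollary is immediate, with no additional probabilistic ingredient required beyond what Nguyen already supplies.
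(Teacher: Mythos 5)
Your proposal matches what the paper does: the paper offers no independent proof of this corollary, but simply states it as a specialization of Corollary~1.17 in \citep{nguyen2017random} (noting, as you do, that the reference covers the complex Hermitian case as a remark). Your additional commentary on why the transfer to the complex case should go through---counting degrees of freedom and absorbing constants into $c(\gamma)$---is sound but goes beyond the paper, which takes the reference's remark at face value.
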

The next lemma follows easily---the original proof for $\K = \R$ is in~\citep[Lem.~7]{bhojanapalli2018smoothed}.
\begin{lemma}\label{lem:bound_eigs}
 Let $M$ be a deterministic Hermitian matrix of size $n$. Let $W$ be a complex Gaussian Wigner matrix of size $n$ with variance $\sigma_W^2$, independent of $M$.
 There exists an absolute constant $c_0$ such that: 
 \[\mathbb{P}\left\{\sum_{i=1}^k \sigma_{n-(i-1)}(M+W)^2 < \frac{k^2\sigma_W^2}{c_0 n} \right\} \leq \exp\left(-\frac{k^2}{8}\log(8\pi)+k\log(n) \right). \]
\end{lemma}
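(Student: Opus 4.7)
The plan is to reduce the lemma to a single application of Corollary~\ref{cor:nguyen}. First I would homogenize the randomness by setting $\overline M = M/\sigma_W$ and $\overline W = W/\sigma_W$, so that $\overline W$ is a complex Gaussian Wigner matrix of variance $1$ (and $\overline M$ is still a deterministic Hermitian matrix) and the singular values rescale as $\sigma_i(M+W) = \sigma_W\,\sigma_i(\overline M + \overline W)$. The event in question then becomes
\[
  \sum_{i=1}^k \sigma_{n-(i-1)}(\overline M + \overline W)^2 < \frac{k^2}{c_0\, n}.
\]

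Next I would exploit that $\overline M + \overline W$ is Hermitian, so its singular values are the absolute values of its eigenvalues, and the $k$ smallest singular values correspond to the $k$ eigenvalues closest to zero. On the event above, the largest term of the sum, namely $\sigma_{n-k+1}(\overline M + \overline W)^2$, is bounded by the sum itself (the other $k-1$ terms are nonnegative), so each of these $k$ smallest singular values is at most $k/\sqrt{c_0\, n}$. Equivalently, at least $k$ eigenvalues of $\overline M + \overline W$ lie in the symmetric interval $I = [-k/\sqrt{c_0\, n},\, k/\sqrt{c_0\, n}]$, so the event of interest is contained in $\{N_I(\overline M + \overline W) \geq k\}$, which is precisely the event controlled by Corollary~\ref{cor:nguyen} with $\varepsilon = 1/\sqrt{c_0}$.

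Applying that corollary with a fixed choice such as $\gamma = 1/2$ yields a bound of the form $n^k \big(c(1/2)/\sqrt{2\pi c_0}\big)^{k^2/4}$. To match the stated bound, I would then select the universal constant $c_0$ large enough that $c(1/2)^2/(2\pi c_0) \leq 1/(8\pi)$, i.e.\ $c_0 \geq 4\, c(1/2)^2$; with this choice, taking logarithms turns the estimate into $\exp\!\big(k\log n - \tfrac{k^2}{8}\log(8\pi)\big)$, which is exactly the bound claimed.

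The argument is essentially a short reduction once the key pigeonhole-style observation (a sum of $k$ nonnegative numbers bounded by $S$ forces each of them to be bounded by $S$) is in place, so there is no genuine obstacle beyond choosing $c_0$. The one point to verify carefully is that Corollary~\ref{cor:nguyen}, stated as a special case of~\citep[Cor.~1.17]{nguyen2017random}, is indeed the complex Hermitian analogue of the real symmetric estimate used in~\citep[Lem.~7]{bhojanapalli2018smoothed}; once this is granted, the proofs in the real and complex cases are identical.
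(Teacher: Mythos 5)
Your proposal is correct and follows essentially the same route as the paper: rescale to a variance-one Wigner matrix, apply Corollary~\ref{cor:nguyen} with $\gamma = 1/2$, use that the sum of the $k$ smallest squared singular values is lower bounded by the $k$th one (the paper phrases this forward, you phrase it as the contrapositive), and tune $c_0 = 4c^2$ so the exponents match. The only cosmetic difference is that you parameterize the interval width via $\varepsilon = 1/\sqrt{c_0}$ and then solve for $c_0$, whereas the paper fixes $\varepsilon = 1/(2c)$ and reads off $c_0$; these are equivalent.
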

\begin{proof}
In our case, the entries of $W$ have variance $\mathbb{E}[|W_{i,j}|^{2}] = \sigma_{W}^{2}$.
Thus, set ${W} = \sigma_{W}\overline{W}$ and ${M} = \sigma_{W}\overline{M}$. 
From Corollary~\ref{cor:nguyen}, we get
$$
N_{\sigma_{W}I}(M + W) = N_{I}(\overline{M} + \overline{W}) < k
$$
with probability at least $1 - n^{k}\left(\dfrac{c\varepsilon}{\sqrt{2\pi}}\right)^{(1 - \gamma)k^{2}/2}$.
In this event, $\sigma_{n -(k-1)}(\overline{M} + \overline{W}) \geq \dfrac{\varepsilon k}{\sqrt{n}}\sigma_{W}$.
With the choices $\gamma = \dfrac{1}{2}$ and $\varepsilon = \dfrac{1}{2c}$, we get that
$$
\sigma_{n - (k-1)}(\overline{M} + \overline{W}) \geq \dfrac{k}{2c\sqrt{n}}\sigma_{W}
$$
with probability at least $1 - \exp(-\frac{k^{2}}{8}\log(8\pi) + k\log(n))$. In that event, 
\begin{align*}
\sum_{i=1}^{k} \sigma_{n - (i-1)}(\overline{M} + \overline{W})^{2} \geq  \sigma_{n - (k-1)}(\overline{M} + \overline{W})^{2} \geq \dfrac{k^{2}}{c_{0}n}\sigma_{W}^{2}
\end{align*}
for some absolute constant $c_{0} = 4c^{2}$.
\end{proof}

\section{Proof of Lemma~\ref{lem:eigbd}} \label{sec:proofeigbd}


This section builds on a result from~\citep{bhojanapalli2018smoothed}, where a similar statement was made under different assumptions. The proof follows closely the developments therein, with appropriate changes.
Using~\eqref{eq:Riem_grad}, $Y$ is an $\varepsilon_g$-FOSP of the perturbed problem if and only if $\norm{2SY}\leq \varepsilon_g$ with $S=M+W$ and
\begin{align}
 	M & = C - (\mathcal{A}^*\circ G^{\dagger}\circ\mathcal{A})\left((C+W)YY^{*}\right).
 	\label{eq:Mproof}
\end{align}
Let $Y=P\Sigma Q^{*}$ be a thin SVD of $Y$, where $P$ is $n\times k$ with 
orthonormal columns (assuming without loss of generality $k \leq n$, as otherwise $\sigma_k(Y) = 0$ deterministically) and $Q$ is $k\times k$ orthogonal. Then, 
\begin{align*}
\varepsilon_{g} \geq \norm{2SY}&=\norm{2(M+W)Y}\\
&\geq 2\sigma_k(Y)\norm{(M+W)P}\\
&\geq 2\sigma_k(Y)\sqrt{\sum_{i=1}^k \sigma_{n-(i-1)}(M+W)^2}.
\end{align*}
Thus, we control the smallest singular value of $Y$ in terms of $\varepsilon$ and the $k$ smallest singular values of $M+W$:
\begin{equation}\label{eq:controlsmalleig}
 \sigma_k(Y) \leq \frac{\varepsilon_g}{2\sqrt{\sum_{i=1}^k \sigma_{n-(i-1)}(M+W)^2}}.
\end{equation}
%
%
Given that $M$ is not statistically independent of $W$, we are not able to directly apply Lemma~\ref{lem:bound_eigs}. Indeed, 
$M$ depends on $W$ and on $Y$, and $Y$ itself is an $\varepsilon_g$-FOSP of the perturbed problem: a feature which depends on $W$. To tackle this issue, we cover the set of possible $M$s with a net. 
Lemma~\ref{lem:bound_eigs} provides a bound for each $\bar{M}$ in this net. By union bound, we can extend the lemma for all $\bar{M}$. By taking a sufficiently dense net, we then infer that $M$ is necessarily close to one of these $\bar{M}$'s and conclude.

To this end, we first control $\norm{M-C}$ using the definitions of $R$~\eqref{boundTrace} and $K$~\eqref{boundProof}:
\begin{align*}
  \norm{M-C} & =\norm{\mathcal{A}^*\circ G^{\dagger}\circ\mathcal{A}\left((C+W)YY^{*} \right)}\\ 
&\leq \norm{\mathcal{A}^*\circ G^{\dagger}\circ\mathcal{A}}_{\mathrm{op}}\norm{C+W}_{\mathrm{op}}\norm{YY^{*}}\\
&\leq  K (\norm{C}_{\mathrm{op}} + \norm{W}_{\mathrm{op}})R.
\end{align*}
Since $W$ is a Gaussian Wigner matrix with variance $\sigma_W^2$, it is a well known fact 
(see for instance\footnote{The reference proves the statement for complex matrices with diagonal entries equal to zero. That proof can easily be adapted to the definition of Wigner matrices used in this paper, both real and complex.} Part 1 of  Appendix A in~\citep{bandeira2016tightness}) that, with probability at least $1 - e^{-\frac{n}{2}}$,
\begin{align}
	\norm{W}_{\mathrm{op}} \leq 3\sigma_{W}\sqrt{n}.
	\label{eq:opnormWbound}
\end{align}
Hence, with probability at least $1 - e^{-\frac{n}{2}}$,
\begin{align*}
	\norm{M-C}& \leq R K (\norm{C}_{\mathrm{op}}+3\sigma_{W}\sqrt{n}) \triangleq \kappa,
\end{align*}
where we recover $\kappa$ as defined in~\eqref{eq:kappa}.

As a result, $M$ lies in a ball of center $C$ and radius $\kappa$. Moreover, from~\eqref{eq:Mproof}, we remark that $M$ lives in an affine subspace of dimension 
$\mathrm{rank}(\mathcal{A}^*\circ G^{\dagger}\circ\mathcal{A})=\mathrm{rank}(\mathcal{A})$. A unit ball in Frobenius norm in $d$ dimensions admits an $\varepsilon$-net of $\left(1+\frac{3}{\varepsilon}\right)^d$ points (see for instance Lemma 1.18 in~\citep{RigolletNotes}).\footnote{The lemma in the reference shows that for any $\varepsilon \in (0,1)$ the cardinality of one such $\varepsilon$-net is bounded by $\left(3/\varepsilon\right)^d$. Furthermore, for $\varepsilon \geq 1$, there is an obvious $\varepsilon$-net of cardinality one, comprising just the origin. Hence, for any $\varepsilon > 0$, it is possible so find an $\varepsilon$-net of cardinality at most $\max\left(1, (3/\varepsilon)^d \right) \leq \left(1+3/\varepsilon\right)^d$.}
Thus, we pick a $\frac{k\sigma_W}{2 \kappa \sqrt{c_0n}}$-net on the unit ball with 
$\left(1 + \frac{6 \kappa \sqrt{c_0n}}{k\sigma_W} \right)^{\mathrm{rank}(\mathcal{A})}$ 
points. 
Rescaling by a factor $\kappa$ gives a $\frac{k\sigma_W}{2 \sqrt{c_0n}}$-net of a ball of radius $\kappa$ centered at zero.
Hence, for any $M$ as in~\eqref{eq:Mproof} there necessarily exists a point $\bar{M}$ in the net satisfying: 
\begin{equation}\label{eq:MMbarineq}
 \norm{\bar{M}-M}\leq \frac{k\sigma_W}{2\sqrt{c_0n}}. 
\end{equation}
Let $T\colon\Sy^{n\times n}\rightarrow \R^k$ be defined by $T_q(A) = (\sigma_{n-q+1}(A),\dots,\sigma_n(A))^{\top}$, that is: $T$ extracts the $q$ 
smallest singular values of $A$, in order. Then, by using the result from Exercise IV.3.5. in~\citep{Bhatia} in the first inequality,\footnote{The same result can be obtained by using 
	Theorem IV.2.14 in the reference. In this setting, one considers the function 
	$F(A)= -\sum_{i=1}^n \sigma_i^2(A)$; then, use the subadditive property of $F$, i.e., $F(A+B)\leq F(A)+F(B)$, and define 
	$A=\bar{M}+W$ and $B=-(M+W)$.} we have:
%
%
\begin{align*}
 \norm{\bar{M}-M}&=\norm{(\bar{M}+W)-(M+W)}\\
 &= \sqrt{\sum_{i=1}^n \sigma_i^2\left((\bar{M}+W)-(M+W) \right)}\\
 &\geq \sqrt{\sum_{i=1}^n \left( \sigma_i(\bar{M}+W)-\sigma_i(M+W)\right)^2}\\
 &= \norm{T_n(\bar{M}+W)-T_n(M+W)}\\
 &\geq \norm{T_k(\bar{M}+W)-{T_k}(M+W)}  \\
 &\geq \norm{T_k(\bar{M}+W)}-\norm{T_k(M+W)},
\end{align*}
where we used the triangular inequality in the last inequality. Thus, rearranging we obtain
\begin{equation}\label{eq:ineqMbar}
 \sqrt{\sum_{i=1}^k \sigma_{n-(i-1)}(M+W)^2}\geq \sqrt{\sum_{i=1}^k \sigma_{n-(i-1)}(\bar{M}+W)^2}-\norm{\bar{M}-M}.
\end{equation}
Taking a union bound for Lemma~\ref{lem:bound_eigs} over each $\bar{M}$ in the net, we get that
\begin{equation}\label{eq:unionbound}
 \sqrt{\sum_{i=1}^k \sigma_{n-(i-1)}(\bar{M}+W)^2} \geq \frac{k\sigma_W}{\sqrt{c_0 n}}
\end{equation}
holds with probability at least
\begin{align}
	1-\exp\left(-\frac{k^2}{8}\log(8\pi)+k\log(n)+\mathrm{rank}(\mathcal{A})\cdot\log\left(1 + \frac{6 \kappa \sqrt{c_0n}}{k\sigma_W} \right)\right).
	\label{eq:probabilitysuccessfirst}
\end{align}
Combining~\eqref{eq:MMbarineq}, \eqref{eq:ineqMbar} and~\eqref{eq:unionbound}, we conclude that
\begin{align}
	 \sqrt{\sum_{i=1}^k \sigma_{n-(i-1)}({M}+W)^2} \geq \frac{k\sigma_W}{2\sqrt{c_0 n}}
\end{align}
holds with probability bounded as in~\eqref{eq:probabilitysuccessfirst}. Combining with~\eqref{eq:controlsmalleig}, we obtain
\begin{align*}
	\sigma_k(Y) \leq \frac{\varepsilon_g}{\sigma_W}\frac{\sqrt{c_0n}}{k}
\end{align*}
as desired. It remains to discuss the probability of success, which we do below.

Inside the log in~\eqref{eq:probabilitysuccessfirst}, we can safely replace $k$ with $1$, as this only hurts the probability. Then, the result holds with probability at least
\[1-\exp\left(-\frac{k^2}{8}\log(8\pi)+k\log(n)+\mathrm{rank}(\mathcal{A})\cdot
\log\left(1+\frac{6 \kappa \sqrt{c_0n}}{\sigma_W} \right) \right).   \]
We would like to constrain $k$ such that the exponential part is bounded by $\delta$. In this fashion, taking a union bound with event~\eqref{eq:opnormWbound}, we will get an overall probability of success of at least $1 - \delta - e^{-\frac{n}{2}}$.
Equivalently, $k$ must satisfy the quadratic inequality
\[ -ak^2 + bk + c \leq \log(\delta), \]
with $a,b >0,$ $c\geq 0$ defined by $a = \frac{\log(8\pi)}{8}, b = \log(n), c = \mathrm{rank}(\mathcal{A})\cdot
\log\left(1+\frac{6 \kappa\sqrt{c_0n}}{\sigma_W} \right)$. This quadratic inequality can be rewritten as:
\begin{align*}
	ak^2 - bk - c' \geq 0,
\end{align*}
with $c' = c + \log(1/\delta)$. This quadratic has two distinct real roots, one positive and one negative:
\begin{align*}
	\frac{b \pm \sqrt{b^2 + 4ac'}}{2a}.
\end{align*}
Since $k$ is positive, we deduce that $k$ needs to be larger than the positive root. The latter obeys the following inequality:\footnote{We use that, for any $u, v \geq 0$, $\sqrt{u + v} \leq \sqrt{\sqrt{u}^2 + \sqrt{v}^2 + 2\sqrt{u}\sqrt{v}} = \sqrt{u} + \sqrt{v}$.}
\begin{align*}
	\frac{b + \sqrt{b^2 + 4ac'}}{2a} & \leq \frac{b + b + 2\sqrt{ac'}}{2a} = \frac{b + \sqrt{ac'}}{a} = \frac{1}{a} b + \frac{1}{\sqrt{a}} \sqrt{c'}.
\end{align*}
Since both $1/a$ and $1/\sqrt{a}$ are smaller than 3, it is sufficient to require
\begin{align*}
	 k \geq 3 \left( b + \sqrt{c + \log(1/\delta)} \right).
\end{align*}
Assuming $\delta \leq 1$, we can use the inequality in the footnote again and find that it is sufficient to have
\begin{align*}
	k \geq 3 \left( b + \sqrt{\log(1/\delta)} + \sqrt{c} \right).
\end{align*}
Plugging in the definitions of $b$ and $c$, we find the sufficient condition (with $\delta \leq 1$):
\begin{align*}
	k & \geq 3 \left[ \log(n) + \sqrt{\log(1/\delta)} + \sqrt{ \mathrm{rank}(\mathcal{A})\cdot
		\log\left(1+\frac{6 \kappa\sqrt{c_0n}}{\sigma_W} \right) } \right].
\end{align*}
%
%
%
%
%
%
Since $\mathrm{rank}(\mathcal{A})\leq m$, we obtain the desired sufficient bound on $k$. 



\section{Proof of Lemma~\ref{lem:almostpsd}}\label{sec:proofalmostpsd}



The Riemannian gradient and Hessian of the objective function $g$ of~\eqref{eq:eq4} are respectively given by equations~\eqref{eq:Riem_grad} and 
\eqref{eq:Riem_hess}. Since $Y$ is an $(\varepsilon_g,\varepsilon_H)$-SOSP, it holds for all $\dot{Y}\in \mathrm{T}_Y\mathcal{M}$~\eqref{eq:TYM} with $\norm{\dot Y}=1$ that: 
\begin{equation}\label{eq:hess_cond}
 -\varepsilon_H\leq \frac{1}{2}\dotprod{\dot{Y},\mathrm{Hess}\; g(Y)[\dot{Y}]}=\dotprodsmall{\dot{Y},S\dot{Y}}.
\end{equation}
Our goal is to show that $S$ is almost positive semidefinite. To this end, we first construct specific $\dot{Y}$'s to exploit the fact that $Y$ is almost rank deficient. Let $z\in \K^k$ be a right singular vector of 
$Y$ such that $\norm{Yz} = \sigma_k(Y)$ and $\norm{z} = 1$. For any $x\in \K^n$ with $\norm{x} = 1$, we introduce $U=xz^*.$ Decompose $U$ in two
components: $U = U_T + U_{T^{\perp}}$, with $U_T$ the component of $U$ in the tangent space $\mathrm{T}_Y\mathcal{M}$ and $U_{T^{\perp}}$ the orthogonal component in $\mathrm{N}_Y\mathcal{M}.$ Given that $\norm{z}=1$, using~\eqref{eq:hess_cond} with $\dot Y = U_T$, we have:
\begin{align}
 \dotprod{x,Sx}=\dotprod{U,SU}&=\dotprod{U_T,SU_T}+2\dotprod{U_{T^{\perp}},SU_T}+\dotprod{U_{T^{\perp}},SU_{T^{\perp}}}\nonumber\\
 &\geq -\varepsilon_H\norm{U_T}^2+2\dotprod{U_{T^{\perp}},SU_T}+\dotprod{U_{T^{\perp}},SU_{T^{\perp}}}\nonumber\\
 & \geq -\varepsilon_H+2\dotprod{U_{T^{\perp}},SU_T}+\dotprod{U_{T^{\perp}},SU_{T^{\perp}}} \nonumber\\
 & = -\varepsilon_H + 2\dotprod{U_{T^{\perp}}, SU} - \dotprod{U_{T^{\perp}},SU_{T^{\perp}}},\label{eq:xSxlowerbd}
\end{align}
where we also used $\norm{U_T}^2\leq \norm{U}^2=1.$ We know by Lemma~\ref{lem:orthproj} that $U_T$ can be written as:
\begin{equation}\label{eq:projU}
 U_T=\text{Proj}_Y U= xz^{*}-\mathcal{A}^*\left(G^{\dagger}\A\left(xz^{*}Y^{*} \right) \right)Y.
\end{equation}
Therefore, the component along the normal space, $U_{T^{\perp}}$, is: 
\begin{equation}\label{eq:projperpU}
 U_{T^{\perp}} = \mathcal{A}^*\left(G^{\dagger}\A\left(xz^{*}Y^{*} \right) \right)Y.
\end{equation}
Using~\eqref{eq:projperpU}, we can derive an upper bound on $\dotprod{U_{T^{\perp}},SU_{T^{\perp}}}$. Indeed, by Cauchy--Schwarz we obtain:
\begin{align*}
	\dotprod{U_{T^{\perp}},SU_{T^{\perp}}} & \leq \|U_{T^{\perp}}\|^2 \|S\|_{\mathrm{op}}.
\end{align*}
From the expression for $S$ in~\eqref{eq:S} and the definitions of $R$~\eqref{boundTrace} and $K$~\eqref{boundProof}, the two factors are easily bounded since $\|xz^* Y^*\| = \|Yz\| = \sigma_k(Y)$:
\begin{align*}
	\|U_{T^{\perp}}\| & \leq \|\mathcal{A}^* \circ G^\dagger \circ \mathcal{A}\|_{\mathrm{op}} \|xz^* Y^*\| \|Y\| \leq K \sqrt{R} \cdot \sigma_k(Y),
\end{align*}
and
\begin{align*}
	\|S\|_{\mathrm{op}} & \leq \|C\|_{\mathrm{op}} + \|\mathcal{A}^*(G^\dagger \mathcal{A}(CYY^*))\|_{\mathrm{op}} \\
						& \leq \|C\|_{\mathrm{op}} + \|\mathcal{A}^* \circ G^\dagger \circ \mathcal{A}\|_{\mathrm{op}} \|CYY^*\| \leq (1+KR)\|C\|_{\mathrm{op}}.
\end{align*}
Combining, we find the bound
\begin{align}
	\dotprod{U_{T^{\perp}}, SU_{T^{\perp}}} & \leq K^2R(1+KR)\|C\|_{\mathrm{op}} \cdot \sigma_k(Y)^2.
	\label{eq:Uperpbd}
\end{align}
%
%
%
Through a similar reasoning, we can handle the remaining term in~\eqref{eq:xSxlowerbd}. The important step is to make sure $\sigma_k(Y)$ appears quadratically:
\begin{align}
	\dotprod{U_{T^{\perp}}, SU} & = \dotprod{ \mathcal{A}^*\left(G^{\dagger}\A\left(xz^{*}Y^{*} \right) \right)Y , Sxz^* } \nonumber\\
		 & = \dotprod{ (\mathcal{A}^* \circ G^\dagger \circ \mathcal{A})(xz^*Y^*) , S xz^*Y^* } \nonumber\\
		 & \geq - \|\mathcal{A}^* \circ G^\dagger \circ \mathcal{A}\|_{\mathrm{op}} \|S\|_{\mathrm{op}} \|xz^*Y^*\|^2 \nonumber\\
		 & \geq - K (1+KR)\|C\|_{\mathrm{op}} \cdot \sigma_k(Y)^2. \label{eq:foobar}
\end{align}
Finally, combining~\eqref{eq:Uperpbd} and~\eqref{eq:foobar} with~\eqref{eq:xSxlowerbd} yields:
\begin{align*}
	\dotprod{x, Sx} & \geq -\varepsilon_{{H}} - 2 K (1+KR)\|C\|_{\mathrm{op}} \cdot \sigma_k(Y)^2 - K^2R(1+KR)\|C\|_{\mathrm{op}} \cdot \sigma_k(Y)^2 \\
	& = -\varepsilon_{{H}} - K (2 + K R) (1+KR)\|C\|_{\mathrm{op}} \cdot \sigma_k(Y)^2 \\
	& \geq -\varepsilon_{{H}} - K (2 + K R)^2 \|C\|_{\mathrm{op}} \cdot \sigma_k(Y)^2 \\
	& = -\varepsilon_H - \zeta \norm{C}_{\mathrm{op}} \cdot \sigma_k(Y)^2,
\end{align*}
where $\zeta$ is as defined in the lemma statement. This holds for any unit vector $x$, hence the proof is complete.

\section{Proof of Theorem~\ref{thm:main}}\label{sec:proofmain}


We now build on Lemmas~\ref{lem:eigbd} and~\ref{lem:almostpsd} to prove Theorem~\ref{thm:main}. The first part of the argument is fully deterministic: it relates the minimal eigenvalue of $S$ to the optimality gap of the optimization problem.

Let $Y$ be an $(\varepsilon_g ,\varepsilon_H)$-SOSP of problem~\eqref{eq:eq4}
with perturbed cost matrix $\tilde C = C+W$. 
By Lemma~\ref{lem:almostpsd} applied to the perturbed problem, 
\begin{align}
	\lambda_{\min}(S) & \geq -{\varepsilon_H} - \zeta \norm{\tilde C}_{\mathrm{op}} \sigma_k(Y)^2,
	\label{eq:Scontrolproof}
\end{align}
where $\zeta$ is as defined in that lemma, and $S$ is as defined in~\eqref{eq:S} with cost matrix $\tilde C$ instead of $C$:
\begin{align*}
	S(Y) & = \tilde C - \mathcal{A}^{*}(\mu(Y)), \textrm{ and} \\
	\mu(Y) & = G^{\dagger}\mathcal{A}(\tilde CYY^{*}).
\end{align*}
Using the definition of $\mathcal{C}$, for all $X^{'} \in \mathcal{C}$ feasible for the problem~\eqref{eq:eq3},
\begin{align*}
 \lambda_{\min}(S) \cdot \Tr{X'} \leq \dotprod{S(Y),X'} = \dotprodsmall{\tilde C,X'}  - \dotprod{\mathcal{A}^{*}(\mu(Y)),X'} 
	 = \dotprodsmall{\tilde C,X'} - \dotprod{\mu(Y),b}.
\end{align*}
In particular
$$
\dotprod{\mu(Y),b} = \dotprod{\mu(Y),\mathcal{A}(YY^{*})} = \dotprodsmall{\tilde C - S(Y),YY^{*}} = g(Y) - \dotprod{S(Y)Y,Y}.
$$
Combining those equations, using $\mathrm{grad}\,g(Y) = 2S(Y)Y$ and taking $X' = X^{*}$, we find
\begin{align*}
0 \leq g(Y) - f^{\star} &\leq - \lambda_{\min}(S) \cdot \Tr{X^{*}} + \frac{1}{2}\langle \mathrm{grad}\,g(Y) ,Y\rangle \\
&\leq -\lambda_{\min}(S) \cdot \Tr{X^{*}} + \frac{\varepsilon_{g}}{2} \lVert Y \rVert.
\end{align*}
Since $\mathcal{C}$ is compact, we use the definition of $R$ in~\eqref{boundTrace} to get that $\Tr{X^{*}} \leq R$ and $\lVert Y \rVert  \leq \sqrt{R}$:
\begin{align}
 0 \leq g(Y) - f^{\star} & \leq -\lambda_{\min}(S) \cdot R + {\frac{\varepsilon_{g}}{2}}  \sqrt{R} \nonumber \\
                     & \leq \left( {\varepsilon_H} + \zeta \norm{\tilde C}_{\mathrm{op}} \sigma_k(Y)^2 \right) R + {\frac{\varepsilon_{g}}{2}}  \sqrt{R},
                     \label{eq:deterministicoptimalitygap}
 \end{align}
where we used~\eqref{eq:Scontrolproof} in the last step.

We can now turn to the probabilistic part of the proof. Using Lemma~\ref{lem:eigbd}, we have with probability at least $1-\delta - e^{-\frac{n}{2}}$ that
\begin{align*}
\norm{W}_{\mathrm{op}} & \leq 3\sigma_{W}\sqrt{n}, \textrm{ and} \\
\sigma_k(Y)& \leq \frac{\varepsilon_g}{\sigma_W}\frac{\sqrt{c_0 n}}{k}, 
\end{align*}
and, by assumption,
\begin{align*}
k & \geq 3\left[ \log(n) + \sqrt{\log(1/\delta)} + \sqrt{m\cdot\log\left(1+\frac{6 \kappa\sqrt{c_0n}}{\sigma_W} \right)} \right] \geq 3\sqrt{m\cdot\log\left(1+\frac{6 \kappa\sqrt{c_0n}}{\sigma_W} \right)}.
\end{align*}
In that event, combining, it follows that
\begin{align*}
	\norm{\tilde C}_{\mathrm{op}} & \leq \norm{C}_{\mathrm{op}} + 3\sigma_{W}\sqrt{n}, \textrm{ and} \\
	\sigma_k(Y)^2 & \leq \varepsilon_g^2 \frac{c_0 n}{9m \sigma_W^2\log\left(1+\frac{6 \kappa\sqrt{c_0n}}{\sigma_W} \right)}.
\end{align*}
Combining with the deterministic result~\eqref{eq:deterministicoptimalitygap}, we find that the optimality gap is bounded as
\begin{align*}
0 \leq g(Y) - f^{\star} & \leq 	\left( {\varepsilon_H} + \varepsilon_g^2  \eta \right) R + {\frac{\varepsilon_{g}}{2}}  \sqrt{R},
\end{align*}
where $\eta$ is as defined in~\eqref{eq:eta}. This concludes the proof.

%
%
%
%

\section{Proof of Corollary \ref{cor::gap}} \label{sec:proofcorollary}

Consider the two following functions:
\begin{align*}
	f(C) & = \min_{X \in \mathcal{C}} \dotprod{C, X}, &
	h(C) & = \max_{X \in \mathcal{C}} \dotprod{C, X}.
\end{align*}
By assumption on $X$,
\begin{align*}
	\dotprod{C, X} - \dotprod{-W, X} = \dotprod{(C+W), X} \leq f(C+W) + \varepsilon_f.
\end{align*}
We can rearrange and get:
\begin{align*}
	\dotprod{C, X} \leq f(C+W) + \varepsilon_f + \dotprod{-W, X} \leq f(C+W) + \varepsilon_f + h(-W).
\end{align*}
Moreover,
\begin{align*}
 f(C+W) & = \min_{X \in \mathcal{C}} \left( \dotprod{C, X} + \dotprod{W, X} \right) \leq f(C) + h(W).
\end{align*}
Overall, we get a bound on the optimality gap, using that $f(C) = f^\star$:
\begin{align*}
	\dotprod{C, X} - f^\star \leq \varepsilon_f + h(W) + h(-W).
\end{align*}
To conclude, observe that
\begin{align*}
	h(W) = \max_{X \in \mathcal{C}} \dotprod{W, X} \leq \|W\|_{\mathrm{op}} \max_{X \in \mathcal{C}} \Tr{X} \leq \|W\|_{\mathrm{op}}R,
\end{align*}
where we used that $\Tr{X} \leq R$ for all $X \in \mathcal{C}$. The same bound applies to $h(-W)$.

\end{document}